\documentclass{article}

\usepackage{microtype}
\usepackage{graphicx}
\usepackage{titletoc}
\usepackage{subcaption}
\usepackage{booktabs} % for professional tables
\usepackage{enumitem}
\usepackage{hyperref}
\usepackage{amsmath}
\usepackage{amssymb}

\usepackage{array}
\usepackage{multirow}
\usepackage{adjustbox}
\usepackage{pifont}
\usepackage{tabularx}
\usepackage{xcolor}
\usepackage{colortbl}

\usepackage{float}

\definecolor{scjepagreen}{RGB}{220,238,255}

\usepackage[preprint]{icml2026}

\usepackage{mathtools}
\usepackage{amsthm}

\usepackage[capitalize,noabbrev]{cleveref}

\theoremstyle{plain}
\newtheorem{theorem}{Theorem}[section]

\newtheorem{lemma}[theorem]{Lemma}

\theoremstyle{definition}
\newtheorem{definition}[theorem]{Definition}
\newtheorem{assumption}[theorem]{Assumption}
\theoremstyle{remark}
\newtheorem{remark}[theorem]{Remark}

\usepackage[disable]{todonotes}

\icmltitlerunning{SC-JEPA: Stabilizing Latent Predictive Learning for Time-Series Anomaly Prediction}

\begin{document}

\twocolumn[

  \icmltitle{SC-JEPA: Stabilizing Latent Predictive Learning for Time-Series Anomaly Prediction}
  % It is OKAY to include author information, even for blind submissions: the
  % style file will automatically remove it for you unless you've provided
  % the [accepted] option to the icml2026 package.

  % List of affiliations: The first argument should be a (short) identifier you
  % will use later to specify author affiliations Academic affiliations
  % should list Department, University, City, Region, Country Industry
  % affiliations should list Company, City, Region, Country

  % You can specify symbols, otherwise they are numbered in order. Ideally, you
  % should not use this facility. Affiliations will be numbered in order of
  % appearance and this is the preferred way.
  \icmlsetsymbol{equal}{*}

  \begin{icmlauthorlist}
    \icmlauthor{Yanan He}{hyn}
    \icmlauthor{Yunshi Wen}{rpi}
    \icmlauthor{Xin Wang}{sbu}
   
    \icmlauthor{Tengfei Ma}{sbu}
    %\icmlauthor{}{sch}
  \end{icmlauthorlist}

\icmlaffiliation{hyn}{
Yale University
}

\icmlaffiliation{rpi}{
Rensselaer Polytechnic Institute
}

\icmlaffiliation{sbu}{
Stony Brook University
}

  %\icmlcorrespondingauthor{Firstname1 Lastname1}{first1.last1@xxx.edu}
  \icmlcorrespondingauthor{Tengfei Ma}{tengfei.ma@stonybrook.edu}

  % You may provide any keywords that you find helpful for describing your
  % paper; these are used to populate the "keywords" metadata in the PDF but
  % will not be shown in the document
  \icmlkeywords{Machine Learning, ICML}

  \vskip 0.3in
]

% this must go after the closing bracket ] following \twocolumn[ ...

% This command actually creates the footnote in the first column listing the
% affiliations and the copyright notice. The command takes one argument, which
% is text to display at the start of the footnote. The \icmlEqualContribution
% command is standard text for equal contribution. Remove it (just {}) if you
% do not need this facility.

% Use ONE of the following lines. DO NOT remove the command.
% If you have no special notice, KEEP empty braces:
\printAffiliationsAndNotice{}  % no special notice (required even if empty)
% Or, if applicable, use the standard equal contribution text:
% \printAffiliationsAndNotice{\icmlEqualContribution}

\begin{abstract}
Time-series anomaly prediction aims to forecast future system failures before they fully emerge, making latent predictive models such as JEPA a promising framework for capturing precursor dynamics. However, directly applying continuous self-distillation to time-series data is often unstable and can lead to representation collapse, while also struggling to model precursors evolving at different temporal scales. To address this, we propose \textbf{SC-JEPA}, a new JEPA-based framework to model time-series anomaly prediction in a discretized predictive state space. It introduces a soft codebook bottleneck to stabilize latent predictive learning and encourage regime-level structure in the learned representations. Building on this stabilized latent space, we further design a multi-resolution predictive objective to capture precursor patterns at different temporal scales. Experiments on five real-world benchmarks show that SC-JEPA achieves strong and consistent early-warning performance. The source code is publicly available at \url{https://github.com/Echoo113/SC-JEPA}.
\end{abstract}
\section{Introduction}

Multivariate time series underpin many safety-critical systems, from industrial plants to server operations, where failures can propagate rapidly and incur substantial costs. In such settings, the key challenge is not merely detecting anomalies after they occur, but anticipating them early enough for intervention. This task, known as \textit{anomaly prediction}, differs fundamentally from both anomaly detection and forecasting \cite{park2025fail}. Detection is inherently reactive and identifies faults only after they manifest \cite{deepsvdd, xu2022anomaly}, while forecasting focuses on regressing future observations or values \cite{zhou2021informer, patchtst}. Anomaly prediction, however, aims to isolate latent precursor signals within the current window that foreshadow impending instability \cite{jhin2023precursor,zhao2024abnormalityforecasting,park2025fail}. Consequently, the primary objective is capturing the evolving system risk rather than accurately reproducing the exact future trajectory.

A natural approach to this problem is to model how system states evolve over time. Joint-Embedding Predictive Architectures (JEPA) \cite{i-jepa} offer a compelling paradigm by learning to predict future representations directly in latent space. This objective aligns well with the mechanics of anomaly prediction: impending failures are often characterized by subtle regime shifts and semantic state drifts, rather than deterministic variations in raw signal amplitude. By optimizing  latent predictability, JEPA provides an inherently task-aligned framework for anticipating system instability \cite{ts-jepa}.

\begin{figure}[t]
    \centering
    \begin{subfigure}[t]{0.70\linewidth}
        \centering
        \includegraphics[width=\linewidth]{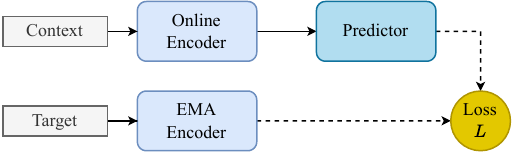}
        \caption{Standard Single-Resolution JEPA}
        \label{fig:single}
    \end{subfigure}

    \vspace{0.5em}

    \begin{subfigure}[t]{0.99\linewidth}
        \centering
        \includegraphics[width=\linewidth]{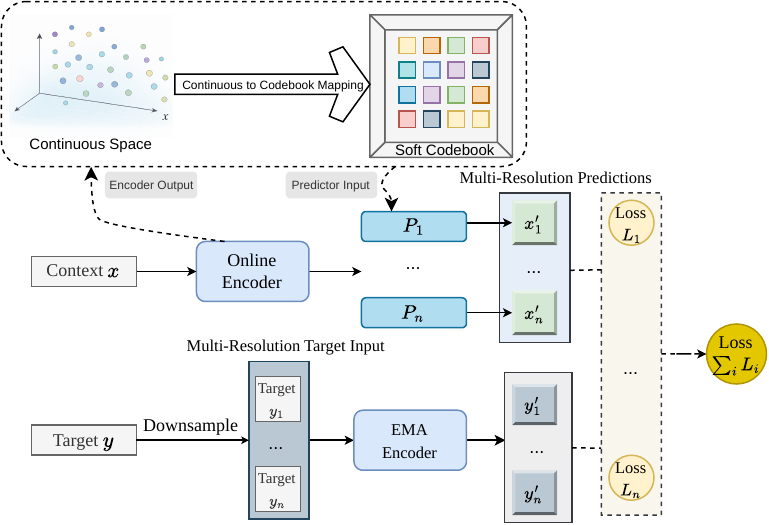}
        \caption{Proposed SC-JEPA}
        \label{fig:multi}
    \end{subfigure}

    \caption{Comparison between the standard single-resolution JEPA and our proposed multi-resolution JEPA with a soft codebook.}
    \label{fig:jepa_comparison}
    \vspace{-2mm}
\end{figure}

However, directly applying JEPA-style predictive learning to anomaly prediction faces two critical bottlenecks. The primary obstacle is a severe vulnerability to representation collapse \cite{simsiam, byol, jing2021understanding, bardes2021vicreg}. First, learning in a continuous latent space without explicit constraints can lead to representation collapse or instability. The encoder may map diverse inputs to a constant vector or experience partial dimensional collapse, stripping the feature space of discriminative capacity \cite{directpred}. The continuous and noisy nature of time series data further exacerbates this instability. Moreover, in real-world systems, anomaly precursors typically emerge across heterogeneous time scales, ranging from abrupt shocks to slowly accumulating drifts \cite{timemixer, scinet, chen2024pathformer, wang2023micn}. A rigid single-scale predictive design (Figure \ref{fig:single}) therefore struggles to capture the full spectrum of precursor dynamics that precede anomalous events. In this work, we argue that these challenges stem from a common source: the lack of structure in the predictive latent space. When representations are unconstrained, the model must simultaneously learn stable dynamics, meaningful abstractions, and scale-aware patterns, leading to degenerate solutions.

To address these challenges, we propose \textbf{SC-JEPA} (\textbf{S}oft \textbf{C}odebook \textbf{JEPA}), a framework that introduces a discretized predictive state space for time-series anomaly prediction. The key idea of SC-JEPA is to constrain latent prediction through a \textbf{\textit{soft codebook bottleneck}}, which maps continuous latent features to distributions over a finite set of prototypes. This mechanism serves two purposes. (i) This design imposes a structured geometry on the latent space, which empirically stabilizes training and encourages representations to organize into regime-level states. (ii) Such structure is well suited to anomaly prediction, where failures often correspond to transitions between underlying system regimes.
Building on this stabilized and more structured latent space, we further introduce a \textit{\textbf{multi-resolution}} predictive design (Figure~\ref{fig:multi}) to model system dynamics over different temporal scales. By jointly predicting fine-grained and coarse-grained future states, the model captures both short-term fluctuations and long-term trends, enabling more robust identification of precursor patterns.

In summary, our main contributions are as follows:
\begin{itemize}[leftmargin=*]
\vspace{-2mm}
\item \textbf{Latent Predictive Learning for Anomaly Prediction:}
We formulate time-series anomaly prediction as a representation-space predictive modeling task, showing that latent state evolution is more informative for precursor identification than raw-value forecasting.

\item \textbf{Soft Codebook as a Structural Stabilizer:} We introduce a soft codebook bottleneck that discretizes latent predictions, encouraging regime-level representations and mitigating collapse in practice.

\item \textbf{Multi-Resolution Extension:} Building on the stabilized latent space induced by the soft codebook, we introduce a multi-resolution predictive objective to better capture precursor signals across temporal scales, thereby enabling earlier anomaly anticipation.
\end{itemize}

\section{Related Works}

\paragraph{Anomaly Prediction.}
Time-series research has largely followed two directions: forecasting and detection. 
Forecasting approaches primarily learn trajectory regression objectives~\cite{zhou2021informer,itransformer} and are typically evaluated on value prediction accuracy rather than early-warning separability. In contrast, detection methods score the current window and are benchmarked under reactive protocols~\cite{deepsvdd,xu2022anomaly}. 
Consequently, there is growing interest in moving beyond reactive detection toward early-warning settings, where the goal is to anticipate future abnormality or failure risk ahead of time~\cite{jhin2023precursor,zhao2024abnormalityforecasting,park2025fail}. However, many existing formulations remain tied to input-space objectives or detector adaptations, suggesting the value of exploring representation-space predictive models that emphasize latent dynamics over local noise, especially when precursors unfold across multiple time horizons. Another line of work related to our method is vector-quantized (VQ) representation learning~\cite{vqvae}, which has been adapted for time-series anomaly detection~\cite{lee2023timevqvae,lee2024timevqvae_ad,wen2025abstracted}; however, our method differs essentially in the target task, representation granularity, and predictive framework.
\paragraph{Joint Embedding Predictive Architectures.}
\label{jepa}
Self-supervised learning for time series is often built on masked modeling, as in PatchTST~\cite{patchtst}, or contrastive learning over augmented views, as in TS2Vec~\cite{ts2vec}. In parallel, Joint-Embedding Predictive Architectures (JEPA) formulate predictive world models by forecasting future representations directly in latent space, as demonstrated by I-JEPA for images and V-JEPA for videos~\cite{i-jepa,v-jepa}. By emphasizing latent predictability, JEPA encourages higher-level, state-like representations, which aligns naturally with time series as noisy observations of underlying evolving processes. Recent efforts have begun to adapt JEPA to time series, including TS-JEPA~\cite{ts-jepa}. 
However, JEPA-style predictive world models remain less explored for complex time-series systems, and stabilizing latent-space self-distillation objectives on continuous, non-stationary time series remains practically challenging.

\paragraph{Multi-Scale Dynamics.}
Multi-resolution modeling is widely used in representation learning to capture both global context and local detail, especially in computer vision~\cite{fpn}.
Time series share a similar multi-scale nature, where long-term trends coexist with fast local variations.
Recent time-series models such as Pyraformer~\cite{pyraformer} and TimeMixer~\cite{timemixer} incorporate multi-scale structure through pyramid-style attention or multi-granularity mixing.
In most cases, scale is introduced to improve forecasting accuracy or efficiency, rather than to explicitly model state evolution at multiple temporal rates.
A less explored direction is to incorporate multi-scale structure into predictive world models, so that latent states can be propagated across different time scales.
This perspective is particularly relevant for anomaly prediction, where early signs may appear as slow drift or short-lived shocks.
These observations point to multi-scale latent dynamics as a natural ingredient for early warning.

\section{Method}

\begin{figure*}[t]
    \centering
    \includegraphics[width=\textwidth]{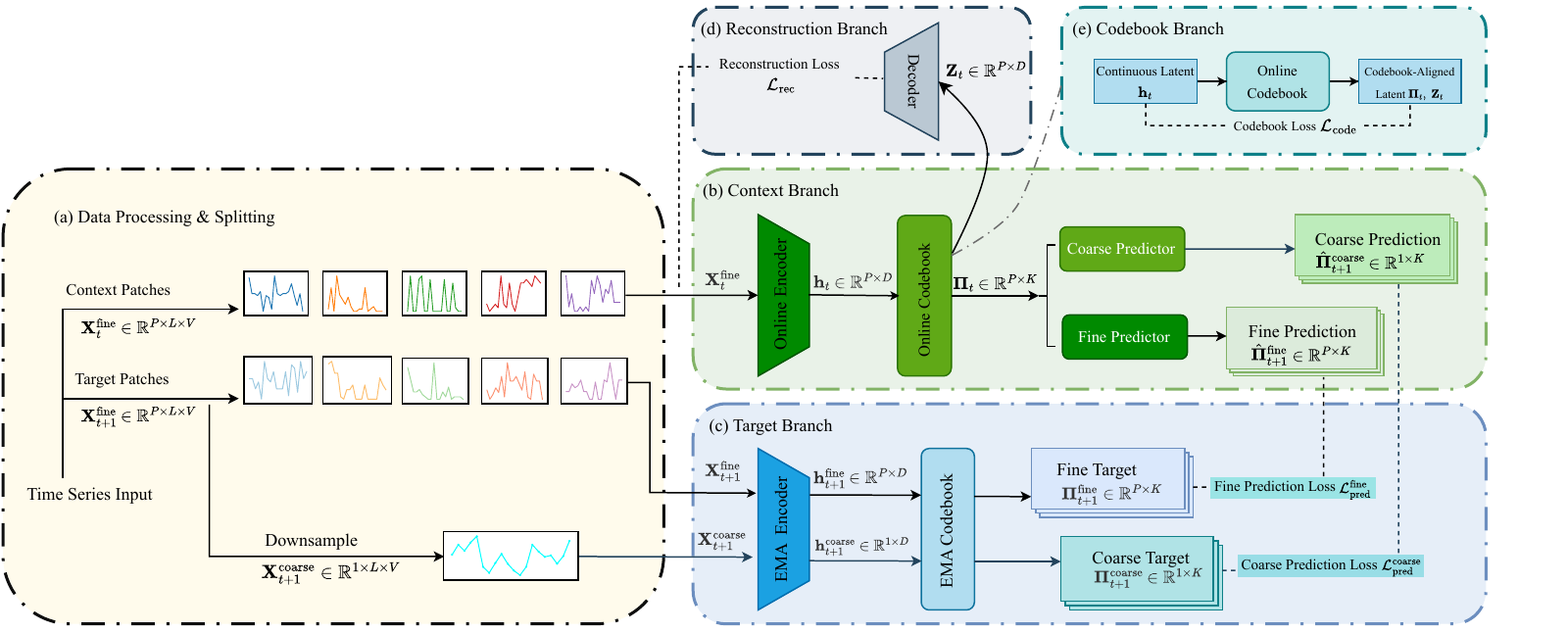}
    \caption{Overview of the SC-JEPA framework for time series representation learning.}
    \label{main:framework}
\end{figure*}

\subsection{Problem Setup}
Consider a multivariate time series $\mathbf{x} \in \mathbb{R}^{T \times V}$.
We partition $\mathbf{x}$ into a sequence of non-overlapping windows $\{\mathbf{W}_t\}_{t=1}^{N}$, where each window $\mathbf{W}_t \in \mathbb{R}^{T_w \times V}$ contains $T_w$ time steps.
Our goal is to learn a representation from the current window $\mathbf{W}_t$ to anticipate the anomaly status of the subsequent window $\mathbf{W}_{t+1}$, enabling proactive intervention by forecasting whether the next window will be anomalous.

\paragraph{Framework Overview.}
This proactive setting is difficult in practice because precursor cues are often weak, non-stationary, and may emerge at different temporal scales, making stable self-supervised prediction nontrivial. We propose a new JEPA framework to solve this challenging task.
(1) To expose such scale-variant precursors, we construct dual-resolution inputs that disentangle local volatilities from global trends, described next in \Cref{sec:input_formulation}.
As illustrated in Figure~\ref{main:framework}, SC-JEPA then adopts an asymmetric teacher--student design: an online context branch predicts future latent states from the current window, while a momentum-updated target branch provides supervision.
The asymmetry imposes a deliberate information gap: the online branch observes only the fine-grained view, yet is supervised to match multi-scale targets produced from both fine and coarse views, encouraging scale-aware representations.
(2) In addition, a soft codebook bottleneck is introduced to impose a discrete inductive bias for anomaly prediction by mapping continuous features to prototype-anchored code distributions, as formalized in \Cref{archi}.

\subsection{Input Formulation and Multi-Scale Views}
\label{sec:input_formulation}
To make scale-variant precursor patterns learnable under a unified predictive objective, we build dual-scale views from each raw window. We first apply RevIN~\cite{kim2022reversible} to $\mathbf{W}_t$, yielding the normalized window $\hat{\mathbf{W}}_t$. The affine statistics are cached to invert normalization for the reconstruction loss.
We then construct two views at different resolutions:

\paragraph{Fine View (Patching).}
We tokenize $\hat{\mathbf{W}}_t$ into $P$ patches of length $L$ (where $T_w = P \cdot L$), forming the fine-grained representation:
\begin{equation}
\mathbf{X}^{\text{fine}}_t = \operatorname{Patch}(\hat{\mathbf{W}}_t) \in \mathbb{R}^{P \times L \times V}.
\end{equation}

\paragraph{Coarse View (Downsampling).}
Simultaneously, we generate a global view by averaging every $P$ time points (Appendix~\ref{app:downavg}) to capture low-frequency trends.
The resulting sequence of length $L$ is treated as a single patch, aligning its dimensionality with the patch-based format:
\begin{equation}
\mathbf{X}^{\text{coarse}}_t = \operatorname{DownAvg}_{P}(\hat{\mathbf{W}}_t) \in \mathbb{R}^{1 \times L \times V}.
\end{equation}
The Online Branch processes only $\mathbf{X}^{\text{fine}}_t$, while the Target Branch encodes both resolutions at $t{+}1$ for supervision.
This asymmetry serves as a structural inductive bias, compelling the model to encapsulate multi-scale semantics. 
Consequently, to bridge this deliberate information gap, we next detail the architectural machinery constructed to orchestrate the flow from partial local observations to stable global predictions.

\subsection{Architecture}
\label{archi}
SC-JEPA couples a shared encoder, a soft codebook, a dual-resolution predictor, and an auxiliary decoder.

\paragraph{Encoder.}
Weak and scale-variant precursors call for representations that are predictable and comparable across resolutions.
We therefore use a shared encoder $\mathcal{E}$ to embed raw windows into a compact latent space, using a channel-independent input formulation.
Implemented as a residual CNN tokenizer followed by a Transformer backbone, $\mathcal{E}$ supports variable-length inputs, so both fine-grained patch sequences and a single coarse-grained token are processed with the same parameters.
Thus, a single encoder maps both resolutions into the same embedding space.

\begin{enumerate}[label=(\roman*), leftmargin=*]
\item \textbf{Online encoder ($\mathcal{E}_\theta$):}
$\mathbf{h}_t=\mathcal{E}_\theta(\mathbf{X}^{\text{fine}}_t)$.
Operating under partial observation, $\mathcal{E}_\theta$ receives only the fine-grained view. It is trained to align with targets produced from both fine and coarse views (via the momentum teacher), which pressures the online representation to internalize global context from local evidence.

\item \textbf{EMA encoder ($\mathcal{E}_\xi$):}
A momentum-updated copy of the online encoder with holistic visibility. It processes both fine and coarse views to generate comprehensive, multi-scale targets:
$\mathbf{h}^{\text{fine}}_{t+1}=\mathcal{E}_\xi(\mathbf{X}^{\text{fine}}_{t+1})$ and
$\mathbf{h}^{\text{coarse}}_{t+1}=\mathcal{E}_\xi(\mathbf{X}^{\text{coarse}}_{t+1})$.
\end{enumerate}
% [Hook to Codebook]

However, continuous encoder features often entangle nuisance variations.
To mitigate this, we introduce a soft codebook that projects features onto a finite set of prototypes.
This discretization not only captures regime-like patterns but also imposes an explicit geometric constraint on the latent space, which, as we detail next, is crucial for ensuring both representation stability and expressiveness.

\paragraph{Codebook.}
Formally, we implement this projection via a differentiable bottleneck $\mathcal{Q}$ that maps continuous encoder features into a finite set of learnable prototypes $\{\mathbf{c}_k\}_{k=1}^{K} \subset \mathbb{R}^{D}$.
For each patch representation $\mathbf{h}_{t,i}\in\mathbb{R}^{D}$, $\mathcal{Q}$ outputs a soft code distribution
$\mathbf{p}_{t,i}\in\Delta^{K-1}$, where $\Delta^{K-1}\coloneqq\{\mathbf{p}\in\mathbb{R}^K:\mathbf{p}\succeq 0,\ \mathbf{1}^\top\mathbf{p}=1\}$.
We parameterize $\mathbf{p}_{t,i}$ via temperature-scaled cosine similarity:
\begin{equation}
p_{t,i,k} \;=\;
\frac{\exp\!\left\{\left\langle \bar{\mathbf{h}}_{t,i},\, \bar{\mathbf{c}}_{k}\right\rangle/\tau\right\}}
{\sum_{j=1}^{K}\exp\!\left\{\left\langle \bar{\mathbf{h}}_{t,i},\, \bar{\mathbf{c}}_{j}\right\rangle/\tau\right\}},
\label{eq:softassign}
\end{equation}
where $\bar{\mathbf{h}}_{t,i}$ and $\bar{\mathbf{c}}_{k}$ denote $\ell_2$-normalized vectors.
We stack these patch-wise distributions to form the window-level code sequence
$\boldsymbol{\Pi}_t \coloneqq [\mathbf{p}_{t,1}; \dots; \mathbf{p}_{t,P}] \in \mathbb{R}^{P \times K}$,
which serves as the predictor input.
In parallel, we compute the expected code embedding
$\mathbf{z}_{t,i} \coloneqq \sum_{k=1}^{K} p_{t,i,k} \mathbf{c}_k \in \mathbb{R}^{D}$,
which is used by the auxiliary decoder for reconstruction.

Since $\mathbf{p}_{t,i}\in\Delta^{K-1}$, the soft embedding $\mathbf{z}_{t,i}=\sum_{k}p_{t,i,k}\mathbf{c}_k$ constitutes a convex combination of finitely many prototypes and therefore remains within the bounded convex hull of $\{\mathbf{c}_k\}$.
This boundedness provides an explicit geometric constraint on the latent space, preventing unbounded feature excursions and improving score stability; a formal \textbf{upper bound} is given in Appendix~\ref{upper-bound}. 
Complementarily, Appendix~\ref{app:lower_bound} derives a strictly positive \textbf{lower bound} on the batch variance,
i.e., $\operatorname{Tr}(\operatorname{Cov}(\mathbf{z}))>0$, providing a sufficient non-collapse certificate.
Together, these bounds ensure that $\boldsymbol{\Pi}_t$ provides a compact, regime-level description of the window that is both stable and expressive.

\paragraph{Predictor.}
With the latent dynamics discretized into a stable code sequence $\boldsymbol{\Pi}_t$, the remaining challenge is to forecast the system's evolution. 
Recognizing that anomalies often originate from conflicting temporal scales, we employ a dual-branch predictor to explicitly decouple these dynamics.
Both branches operate on the same history $\boldsymbol{\Pi}_t$ but target distinct future resolutions:

\begin{enumerate}[label=(\roman*), leftmargin=*]
    \item \textbf{Fine Predictor (Micro-Dynamics):}
    % [Motivation: Tracking details]
    To capture high-frequency volatility and local point anomalies, a Transformer maps $\boldsymbol{\Pi}_t$ to a sequence of fine-grained predictions
    $\hat{\boldsymbol{\Pi}}_{t+1}^{\text{fine}} \in \mathbb{R}^{P \times K}$. This branch preserves the patch-wise resolution, ensuring that transient disturbances are not smoothed out.

    \item \textbf{Coarse Predictor (Macro-Dynamics):}
    % [Motivation: Summarizing trends]
    To distill low-frequency trends and global distribution shifts, a learnable query token $\mathbf{q}$ aggregates the entire history into a single global prediction
    $\hat{\boldsymbol{\Pi}}_{t+1}^{\text{coarse}} \in \mathbb{R}^{1 \times K}$ via cross-attention:
    \begin{equation}
        \hat{\boldsymbol{\Pi}}_{t+1}^{\text{coarse}}
        = \operatorname{CrossAttn}\!\big(\mathbf{q}, \boldsymbol{\Pi}_t\big).
    \end{equation}
    This mechanism forces the model to abstract away local noise and focus on the trajectory of the system state.
\end{enumerate}
For supervision, the EMA branch generates the corresponding multi-scale targets $\boldsymbol{\Pi}^{\mathrm{fine}}_{t+1}$ and $\boldsymbol{\Pi}^{\mathrm{coarse}}_{t+1}$ using the shared codebook $\mathcal{Q}$.
\paragraph{Decoder.}

Because prediction alone can encourage overly abstract codes, an auxiliary decoder $\mathcal{D}$ reconstructs the input patches $\mathbf{X}^{\text{fine}}_t$ from the online soft-quantized embeddings $\mathbf{z}_t$ to retain signal-level semantics.
The reconstruction is evaluated after inverting RevIN using the cached statistics.

SC-JEPA achieves proactive anomaly prediction via multi-resolution forecasting of regime-level dynamics from the current window.
We next formalize the unified training objective in \Cref{sec:loss}.

\begin{table*}[htbp]
\centering
\caption{Full results on five benchmark datasets. Baselines are grouped by modeling characteristics. All metrics are reported in percentage (\%) as the mean over 5 runs with different random seeds.}
\label{main_exp}
\setlength{\tabcolsep}{2.5pt} 
\renewcommand{\arraystretch}{1.2}
\resizebox{\textwidth}{!}{%
\begin{tabular}{l|l|cccc|cccc|cccc|cccc|cccc}
\toprule
\textbf{Category} & \textbf{Models} 
& \multicolumn{4}{c|}{\textbf{MSL}} 
& \multicolumn{4}{c|}{\textbf{SMAP}} 
& \multicolumn{4}{c|}{\textbf{SWaT}} 
& \multicolumn{4}{c|}{\textbf{PSM}}
& \multicolumn{4}{c}{\textbf{SMD}} \\
\cline{3-22}
 &  & F1 & AUC & Prec & Rec
 & F1 & AUC & Prec & Rec
 & F1 & AUC & Prec & Rec
 & F1 & AUC & Prec & Rec 
 & F1 & AUC & Prec & Rec \\
\hline

\multirow{2}{*}{Traditional}
& K-Means~\cite{macqueen1967kmeans}
& 20.63 & 52.17 & 19.27 & 31.34
& 8.74  & 39.82 & 10.52 & 10.16
& 14.45 & 62.20 & 7.99 & \textbf{77.05}
& 42.75 & 51.37 & 30.40 & 78.89
& 14.79 & 57.49 & \textbf{35.83} & 20.64 \\

& DeepSVDD~\cite{deepsvdd}
& 23.46 & 53.67 & 14.78 & 73.87
& 22.05 & 44.07 & 12.90 & \textbf{81.99}
& 15.28 & 58.22 & 12.65 & 20.00
& 44.45 & 49.95 & 30.31 & 89.38
& 9.41 & 47.31 & 6.25 & 19.15 \\

\hline

\multirow{4}{*}{Generalist}
& LSTM-VAE~\cite{park2018multimodal}
& 22.42 & 55.22 & 20.84 & 32.43
& 22.77 & 52.97 & 14.35 & 55.39
& 54.60 & 79.91 & 64.92 & 50.07
& 56.73 & 71.95 & 49.32 & 66.86
& 10.04 & 56.18 & 5.78 & 38.29 \\

& iTransformer~\cite{itransformer}
& 27.25 & 64.61 & 16.71 & 75.40
& \cellcolor{gray!20}33.00 & 60.91 & 22.67 & 64.32
& 70.49 & 82.10 & \underline{98.17} & 55.00
& 54.12 & 63.09 & 37.37 & \underline{98.29}
& 14.20 & 57.94 & 8.11 & \underline{62.34} \\

& PatchTST~\cite{patchtst}
& 26.98 & 60.43 & \textbf{75.00} & 16.45
& 30.06 & 61.62 & 21.76 & 49.06
& 70.64 & 81.93 & 94.03 & 57.58
& \cellcolor{gray!20}58.17 & \cellcolor{gray!20}75.76 & \textbf{64.68} & 52.86
& 11.79 & 47.76 & 6.58 & 57.02 \\

& TS2Vec~\cite{ts2vec}
& 23.48 & \cellcolor{gray!20}64.86 & 31.45 & 18.82
& 32.81 & 61.48 & 23.81 & 52.81
& 67.00 & \cellcolor{gray!20}83.76 & 80.95 & 57.17
& 48.43 & 72.13 & 32.39 & 95.96
& 14.78 & 56.28 & 8.83 & 50.43 \\

\hline

\multirow{1}{*}{LLM}
& Qwen2.5-3B~\cite{qwen2_5}
& 21.24 & 55.66 & 17.82 & 26.67
& 0.34  & \cellcolor{gray!20}63.61 & 1.67  & 0.19
& 17.29 & 56.75 & 41.82 & 14.43
& 47.44 & 65.65 & 50.44 & 46.57
& 13.00 & \cellcolor{gray!20}60.66 & 8.29  & 31.06 \\

\hline

\multirow{3}{*}{Hierarchical}
& TimesNet~\cite{timesnet}
& \cellcolor{gray!20}28.44 & 59.70 & 30.31 & 36.59
& 27.78 & 57.01 & 17.98 & 62.81
& 67.20 & 83.24 & 90.87 & 54.42
& 52.64 & 56.12 & 35.76 & \textbf{99.71}
& 12.51 & 56.33 & 7.64 & 35.96 \\

& MICN~\cite{wang2023micn}
& 20.00 & 60.18 & 11.56 & 75.29
& 20.81 & 51.16 & 13.75 & 55.58
& 48.31 & 79.38 & 77.99 & 35.33
& 48.61 & 54.97 & 34.64 & 82.09
& 12.05 & 52.21 & 7.72 & 28.72 \\

& Pathformer~\cite{chen2024pathformer}
& 21.69 & 61.86 & 13.14 & 68.24
& 23.58 & 58.67 & 14.10 & \underline{72.12}
& 14.90 & 59.87 & 9.20 & 40.98
& 51.77 & 54.41 & 35.04 & \underline{99.10}
& 14.32 & 56.49 & 9.41 & 36.60 \\

\hline

\multirow{3}{*}{Anomaly Prediction}
& PAD~\cite{jhin2023precursor}
& 21.67 & 55.97 & 12.19 & \textbf{97.78}
& 26.92 & 59.79 & 18.15 & 55.00
& 69.75 & 83.71 & 81.54 & \underline{62.79}
& 57.82 & 73.84 & \underline{57.54} & 58.57
& 12.52 & 58.53 & 6.71 & \textbf{93.40} \\

& A2P~\cite{park2025fail}
& 23.10 & 59.43 & 13.92 & 74.12
& 21.68 & 60.37 & 15.88 & 38.85
& 70.18 & 81.92 & \textbf{99.10} & 54.33
& 50.71 & 57.01 & 37.16 & 80.00
& 15.21 & 58.87 & 12.23 & 22.13 \\

& FCM~\cite{zhao2024abnormalityforecasting}
& 24.57 & 61.09 & 14.25 & \underline{90.00}
& 30.60 & 61.85 & \textbf{25.25} & 41.41
& 69.09 & 81.66 & 94.25 & 54.83
& 52.34 & 57.25 & 37.00 & 89.85
& \cellcolor{gray!20}19.25 & 60.44 & 15.59 & 28.94 \\

\hline

\multirow{2}{*}{JEPAs}
& TS-JEPA~\cite{ts-jepa}
& 25.49 & 60.33 & 17.10 & 56.84
& 26.57 & 57.38 & 18.23 & 49.73
& \cellcolor{gray!20}71.95 & 80.33 & 84.32 & 62.76
& 53.32 & 66.10 & 36.95 & 96.29
& 7.49 & 49.53 & \underline{19.11} & 20.43 \\

& \cellcolor{scjepagreen}\textbf{SC-JEPA (Ours)}
& \cellcolor{scjepagreen}\textbf{33.58}
& \cellcolor{scjepagreen}\textbf{66.08}
& \cellcolor{scjepagreen}\underline{35.87}
& \cellcolor{scjepagreen}40.80
& \cellcolor{scjepagreen}\textbf{33.64}
& \cellcolor{scjepagreen}\textbf{65.41}
& \cellcolor{scjepagreen}\underline{24.24}
& \cellcolor{scjepagreen}56.02
& \cellcolor{scjepagreen}\textbf{72.89}
& \cellcolor{scjepagreen}\textbf{84.95}
& \cellcolor{scjepagreen}98.00
& \cellcolor{scjepagreen}58.05
& \cellcolor{scjepagreen}\textbf{61.61}
& \cellcolor{scjepagreen}\textbf{77.85}
& \cellcolor{scjepagreen}55.01
& \cellcolor{scjepagreen}72.00
& \cellcolor{scjepagreen}\textbf{20.82}
& \cellcolor{scjepagreen}\textbf{62.29}
& \cellcolor{scjepagreen}18.18
& \cellcolor{scjepagreen}31.49 \\
\bottomrule
\end{tabular}%
}
\end{table*}

\subsection{Optimization Objective}
\label{sec:loss}
We first introduce the notation used to define the three coupled objectives. Let $\mathbf{p}_{t,i}\in\mathbb{R}^{K}$ denote the online soft code distribution for patch $i$ at window $t$
(Eq.~\ref{eq:softassign}), satisfying $\mathbf{p}_{t,i}\ge \mathbf{0}$ and $\mathbf{1}^\top \mathbf{p}_{t,i}=1$.
Let $\mathbf{z}_{t,i}=\sum_{k=1}^{K}p_{t,i,k}\mathbf{c}_k$ be the corresponding soft-quantized latent.
The fine predictor outputs $\hat{\mathbf{p}}^{\mathrm{fine}}_{t+1,i}$ and latent $\hat{\mathbf{z}}_{t+1,i}$ for each patch,
while the coarse predictor outputs a single distribution $\hat{\mathbf{p}}^{\mathrm{coarse}}_{t+1}$.
All targets $\mathbf{p}^{\mathrm{fine}}_{t+1,i}, \mathbf{p}^{\mathrm{coarse}}_{t+1}$ and $\mathbf{z}_{t+1,i}$
are produced by the EMA (target) branch.

\paragraph{Overall objective.}
We optimize the online encoder, codebook, predictors, and decoder using three terms:
\begin{equation}
\mathcal{L}
=
\mathcal{L}_{\mathrm{pred}}
+
\mathcal{L}_{\mathrm{code}}
+
\lambda_r\,\mathcal{L}_{\mathrm{rec}}.
\label{eq:total_loss}
\end{equation}
With the overall objective in place, we detail each term next.
\paragraph{(A) Predictive Objective ($\mathcal{L}_{\mathrm{pred}}$).}
We supervise online predictions using EMA targets at both fine (patch) and coarse (window) resolutions, so that the representation is jointly shaped by local precursors and global trends.

\begin{equation}
\mathcal{L}_{\mathrm{pred}}
=
\underbrace{\lambda_f \left( \mathcal{L}_{\mathrm{KL}}^{\mathrm{fine}} + \gamma \mathcal{L}_{\mathrm{MSE}}^{\mathrm{fine}} \right)}_{\text{Local Structure}}
+
\underbrace{\lambda_c \mathcal{L}_{\mathrm{KL}}^{\mathrm{coarse}}}_{\text{Global Dynamics}}.
\label{eq:pred_loss}
\end{equation}
The fine-grained components align code distributions and stabilize latent features, while the coarse objective captures high-level semantics:
\begin{align}
\mathcal{L}_{\mathrm{KL}}^{\mathrm{fine}} &= \sum_{i} D_{\mathrm{KL}}(\mathbf{p}^{\mathrm{fine}}_{t+1,i} \parallel \hat{\mathbf{p}}^{\mathrm{fine}}_{t+1,i}), \\
\mathcal{L}_{\mathrm{MSE}}^{\mathrm{fine}} &= \sum_{i} \| \mathbf{z}_{t+1,i} - \hat{\mathbf{z}}_{t+1,i} \|_2^2, \\
\mathcal{L}_{\mathrm{KL}}^{\mathrm{coarse}} &= D_{\mathrm{KL}}(\mathbf{p}^{\mathrm{coarse}}_{t+1} \parallel \hat{\mathbf{p}}^{\mathrm{coarse}}_{t+1}).
\end{align}
Here, $D_{\mathrm{KL}}$ denotes the Kullback--Leibler divergence. $\lambda_f$ and $\lambda_c$ balance patch-level supervision with window-level consistency.
Since the targets $\mathbf{p}$ are probability vectors produced by the softmax codebook (i.e., $\mathbf{p}\in\Delta^{K-1}$), KL provides a principled objective for distributional alignment in the code space, rather than treating codes as unconstrained Euclidean regressands.
In addition, the momentum-updated EMA branch yields slowly varying targets, which stabilizes self-distillation and makes KL matching better conditioned during training.

\paragraph{(B) Codebook Objective ($\mathcal{L}_{\mathrm{code}}$).}
A well-behaved bottleneck requires more than defining soft assignments: during training, the code space must stay synchronized with the encoder features, and the assignment distributions must avoid becoming either overly diffuse or collapsed to a few indices. 
Accordingly, we regularize the soft codebook with (i) prototype--feature alignment and (ii) dual-entropy calibration:
\begin{equation}
\resizebox{\columnwidth}{!}{$
\mathcal{L}_{\mathrm{code}}
=
\underbrace{
\lambda_{\mathrm{emb}}\mathcal{L}_{\mathrm{emb}}
+
\lambda_{\mathrm{com}}\mathcal{L}_{\mathrm{com}}
}_{\text{Codebook alignment}}
+
\underbrace{
\lambda_{\mathrm{ent}}^{\mathrm{sample}} \mathcal{L}_{\mathrm{ent}}^{\mathrm{sample}}
-
\lambda_{\mathrm{ent}}^{\mathrm{batch}} \mathcal{L}_{\mathrm{ent}}^{\mathrm{batch}}
}_{\text{Entropy}}.
$}
\end{equation}
\label{loss code}

The alignment terms couple continuous encoder features $\mathbf{h}_{t,i}$ with their soft-quantized embeddings $\mathbf{z}_{t,i}$ via a bidirectional objective:
\begin{align}
\mathcal{L}_{\mathrm{emb}} &= \textstyle \sum_{i} \| \operatorname{sg}(\mathbf{z}_{t,i}) - \mathbf{h}_{t,i} \|_2^2, \\
\mathcal{L}_{\mathrm{com}} &= \textstyle \sum_{i} \| \mathbf{z}_{t,i} - \operatorname{sg}(\mathbf{h}_{t,i}) \|_2^2,
\end{align}
where $\operatorname{sg}(\cdot)$ denotes the stop-gradient operator.

Meanwhile, we calibrate the assignment distributions through dual-entropy control:
\begin{equation}
\mathcal{L}_{\mathrm{ent}}^{\mathrm{sample}} = \mathbb{E}[H(\mathbf{p})], \quad
\mathcal{L}_{\mathrm{ent}}^{\mathrm{batch}} = H(\mathbb{E}[\mathbf{p}]).
\end{equation}
In particular, minimizing $\mathcal{L}_{\mathrm{ent}}^{\mathrm{sample}}$ sharpens per-sample assignments and encourages decisive code selection, whereas maximizing $\mathcal{L}_{\mathrm{ent}}^{\mathrm{batch}}$ promotes diverse code usage across the dataset, mitigating index collapse.
Together, these terms keep the prototype set aligned with the evolving feature space while maintaining stable, informative code distributions for downstream prediction.

\paragraph{(C) Reconstruction Objective ($\mathcal{L}_{\mathrm{rec}}$).}
To anchor latent representations and prevent collapse, we minimize the reconstruction error between the input patches and their reconstructions (after denormalization):
\begin{equation}
\mathcal{L}_{\mathrm{rec}} = \sum_{i=1}^{P} \big\|\hat{\mathbf{X}}_{t,i} - \mathbf{X}_{t,i}\big\|_2^2.
\end{equation}
Here $\hat{\mathbf{X}}_{t,i}$ is denormalized using the cached RevIN statistics of window $t$ before computing $\mathcal{L}_{\mathrm{rec}}$.

\section{Experiments}

\subsection{Experimental Setup}
\label{subsec:setup}

\paragraph{Anomaly Prediction Setting.}
Following prior anomaly prediction frameworks~\cite{jhin2023precursor,park2025fail,zhao2024abnormalityforecasting}, we partition each time series into non-overlapping windows of length 100.
For each sample, the current window $\mathbf{X}_t$ is used as the model input, while the next window $\mathbf{X}_{t+1}$ is used to define the supervision label $Y_{t+1}$.
If $\mathbf{X}_{t+1}$ contains at least one anomalous time point, we assign $Y_{t+1}=1$; otherwise, we assign $Y_{t+1}=0$.
The model is then trained to use only $\mathbf{X}_t$ to predict $Y_{t+1}$, namely whether the next non-overlapping window will contain an anomaly.
Under this formulation, $\mathbf{X}_t$ serves as the precursor context, and the task evaluates whether the model can identify early warning signals before anomalies fully emerge.
We report window-level Precision, Recall, F1, and AUC.

\paragraph{Datasets and Baselines.}

We evaluate on five widely-used multivariate anomaly detection benchmarks spanning spacecraft telemetry, industrial control, and cloud services.
Mars Science Laboratory (MSL) and Soil Moisture Active Passive (SMAP) are NASA spacecraft telemetry datasets~\cite{hundman2018telemanom}.
Secure Water Treatment (SWaT) is an industrial control system (ICS) water-treatment testbed with attack scenarios~\cite{goh2017swat}.
Pooled Server Metrics (PSM) is a public server monitoring dataset collected from eBay production machines~\cite{su2019omnianomaly}.
Server Machine Dataset (SMD) is a large-scale multivariate time-series benchmark collected from multiple server machines~\cite{su2019omnianomaly}.
To evaluate SC-JEPA, we benchmark against six categories of baselines:
(1) traditional methods: K-Means~\cite{macqueen1967kmeans}, DeepSVDD~\cite{deepsvdd};
(2) generalist sequence models: LSTM-VAE~\cite{park2018multimodal}, iTransformer~\cite{itransformer}, PatchTST~\cite{patchtst}, TS2Vec~\cite{ts2vec};
(3) LLM: Qwen2.5-3B~\cite{qwen2_5};
(4) hierarchical models: TimesNet~\cite{timesnet}, MICN~\cite{wang2023micn}, Pathformer~\cite{chen2024pathformer};
(5) anomaly prediction methods: PAD~\cite{jhin2023precursor}, A2P~\cite{park2025fail}, FCM~\cite{zhao2024abnormalityforecasting};
and (6) JEPA-based models: TS-JEPA~\cite{ts-jepa}.

\textbf{Training \& Evaluation Protocol.}
We convert each multivariate stream into non-overlapping context–target pairs with stride 100, yielding windows with context length $T_c=100$ and target length $T_t=100$. Each window is tokenized into $P=5$ non-overlapping patches of length $L=20$. We use a two-phase pipeline. In self-supervised pre-training, we train on the official training set without labels and use a 9:1 train--validation split for model selection and early stopping.
For downstream training on the official test set, we adopt a 6:2:2 chronological split: we train the classifier on the training split, select the detection threshold on the validation split, and then fix it for final evaluation on the held-out test split.
We report window-level Precision, Recall, F1, and AUC.
Full implementation details, hyperparameters, and the downstream protocol are provided in Appendix~\ref{app:implementation} and Appendix~\ref{app:downstream_protocol}.

\subsection{Results}
\label{subsec:main_results}

\paragraph{Main results.}
Table~\ref{main_exp} reports the mean anomaly prediction results on five benchmarks using the same data splits and five random seeds. Full results are available in Appendix~\ref{app:extended_results}. Under this setting, SC-JEPA achieves the top AUC on all datasets, suggesting consistently strong ranking quality for upcoming abnormal windows under our precursor-based definition. Moreover, SC-JEPA attains best F1 performance across benchmarks, indicating that its ranking quality can translate into effective decisions under the shared threshold-selection procedure.

The trends of competing baselines are broadly consistent with their modeling assumptions. Classical methods such as K-Means and DeepSVDD rely on static similarity or compactness in a feature space, which can be less sensitive when precursor windows deviate only mildly from normal patterns. Reconstruction-based models such as LSTM-VAE may also be less aligned with a precursor setting, since reconstruction error often becomes more distinctive after abnormal behavior has developed. In addition, a set of advanced time-series backbones, including TimesNet, TS2Vec, PatchTST, and iTransformer, provide strong sequence representations and capture long-range dependencies. In our anomaly prediction setting, however, these general-purpose backbones are not directly optimized for highlighting weak early-stage transitions toward future abnormalities, so their scores can be less separable when the precursor signal is subtle and noisy. Likewise, the inferior performance of TS-JEPA highlights the limitations of a pure continuous predictive model; lacking both a discrete codebook and multi-resolution modeling, it suffers from optimization instability and fails to capture diverse precursor dynamics. Finally, while PAD is explicitly designed for anomaly prediction, it relies on a single-scale objective that restricts its ability to decouple complex latent evolution from stochastic noise.

\begin{figure*}[t]
    \centering
    \begin{subfigure}[t]{0.23\textwidth}
        \centering
        \includegraphics[width=\linewidth]{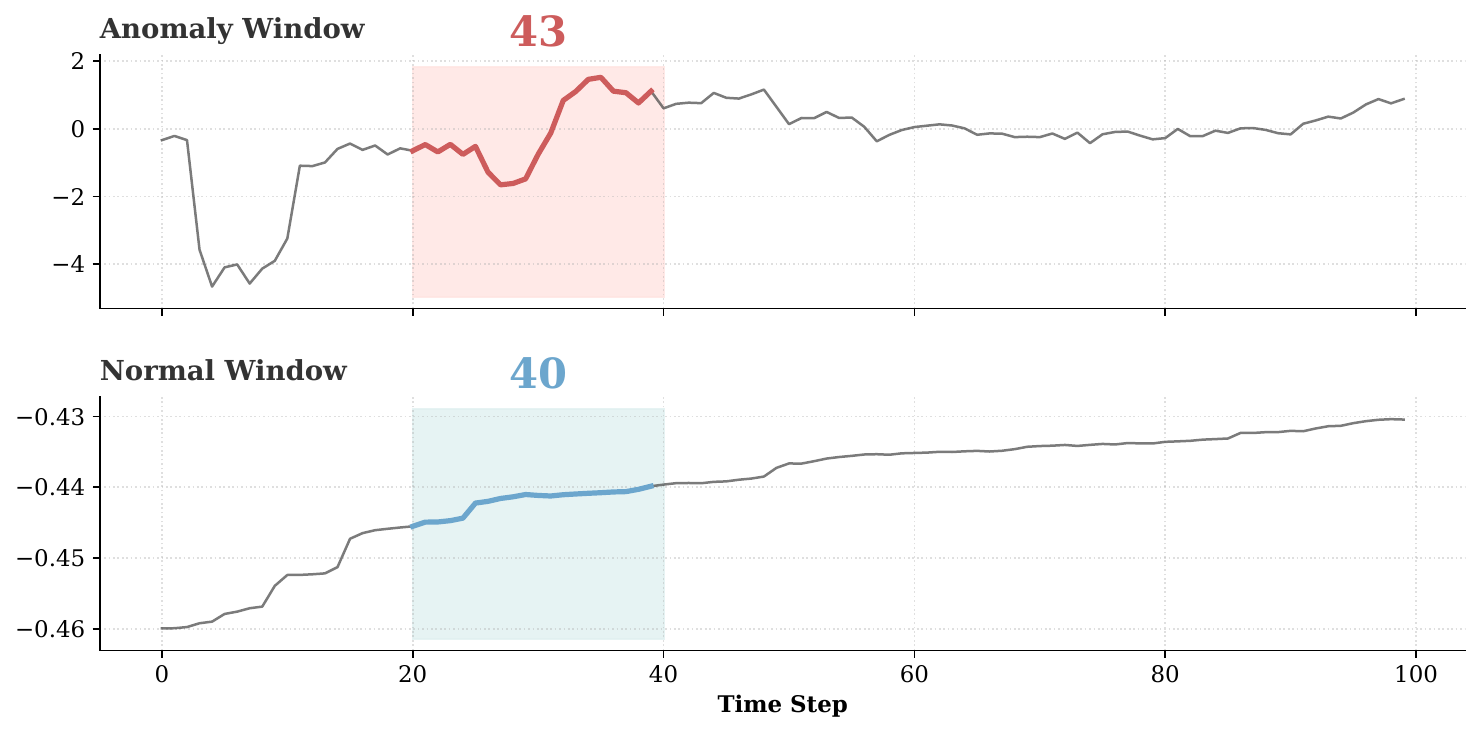}
        \caption*{\scriptsize Anomalous 43; Normal 40}
    \end{subfigure}
    \hfill
    \begin{subfigure}[t]{0.23\textwidth}
        \centering
        \includegraphics[width=\linewidth]{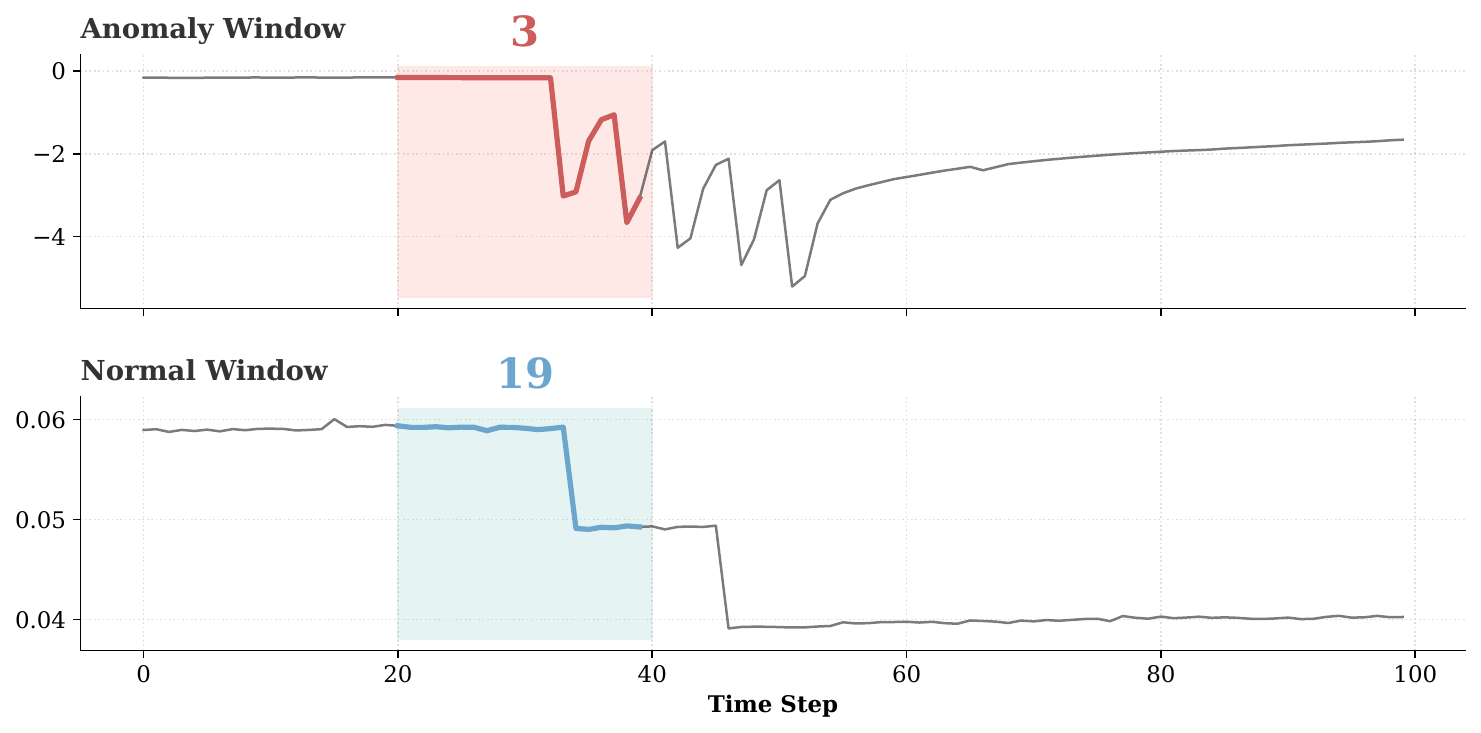}
        \caption*{\scriptsize Anomalous 3; Normal 19}
    \end{subfigure}
    \hfill
    \begin{subfigure}[t]{0.23\textwidth}
        \centering
        \includegraphics[width=\linewidth]{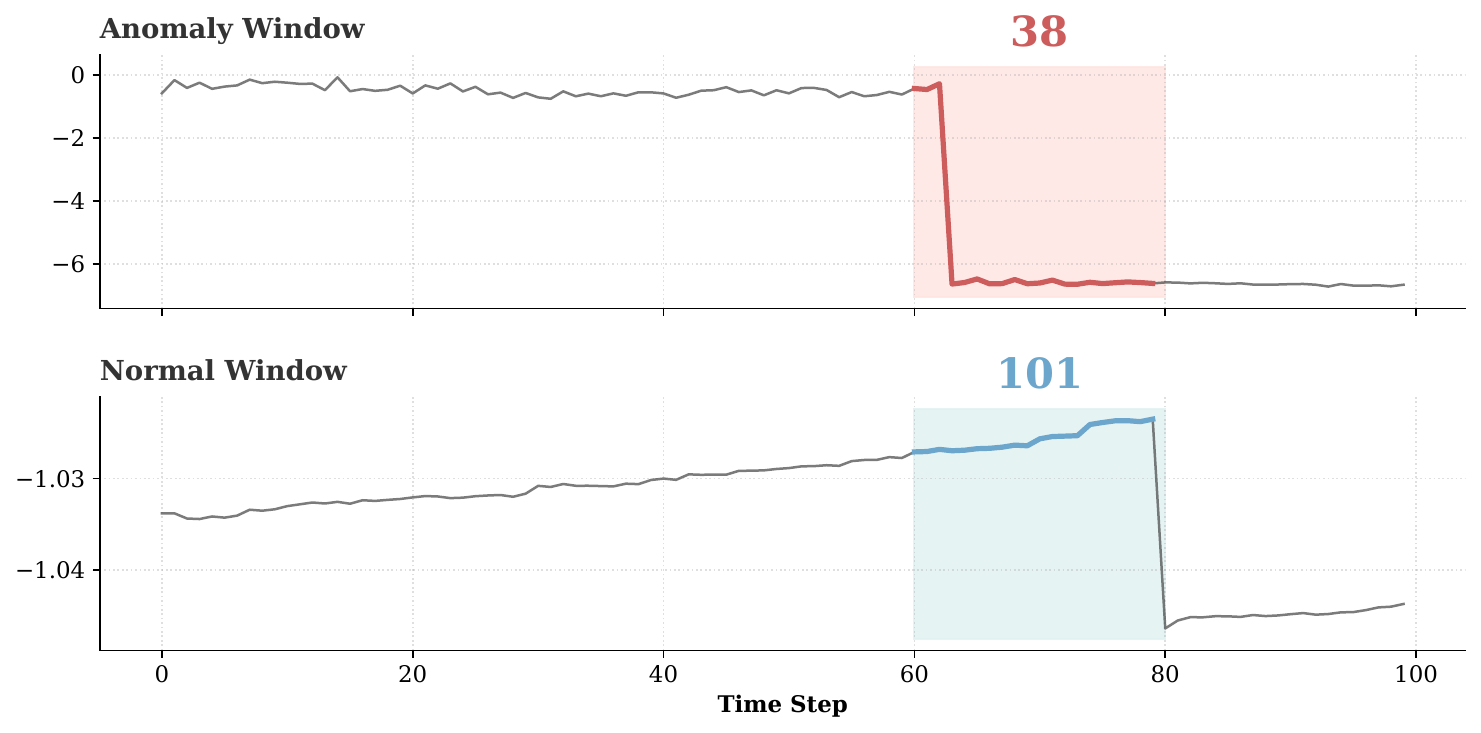}
        \caption*{\scriptsize Anomalous 38; Normal 101}
    \end{subfigure}
    \hfill
    \begin{subfigure}[t]{0.23\textwidth}
        \centering
        \includegraphics[width=\linewidth]{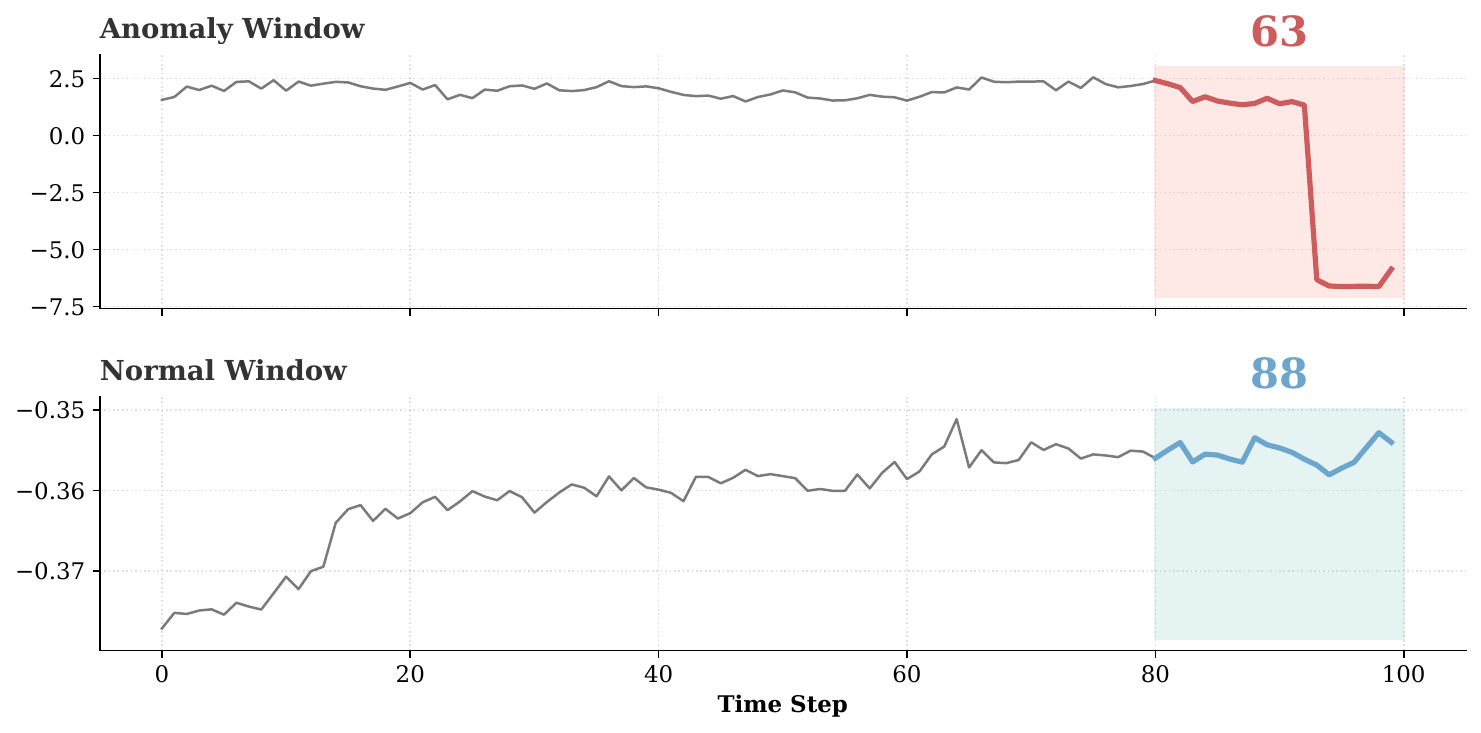}
        \caption*{\scriptsize Anomalous 63; Normal 88}
    \end{subfigure}

    \caption{Window-level comparisons of dominant latent codes between anomalous and normal windows on the PSM dataset. Shaded regions indicate patch-level segments where the dominant code is selected.}
    \label{analysis:code_top}
\end{figure*}

\begin{table}[t]
\centering
\caption{In-domain and cross-domain generalization results. Mean F1 and AUC (\%) are reported over 5 runs. ID: in-domain; CD: cross-domain; $\Delta$: relative change from ID to CD.}
\label{exp:generalization}

\setlength{\tabcolsep}{2.5pt}
\renewcommand{\arraystretch}{1.05}

\resizebox{\columnwidth}{!}{%
\begin{tabular}{l|l|cc|cc|cc|cc}
\toprule
\multirow{2}{*}{\textbf{Models}} & \multirow{2}{*}{\textbf{Settings}} 
& \multicolumn{2}{c|}{\textbf{MSL}} 
& \multicolumn{2}{c|}{\textbf{SMAP}} 
& \multicolumn{2}{c|}{\textbf{SWaT}} 
& \multicolumn{2}{c}{\textbf{PSM}} \\
\cline{3-10}
 &  & F1 & AUC & F1 & AUC & F1 & AUC & F1 & AUC \\
\hline

\multirow{3}{*}{\textbf{PatchTST}} 
& ID    
& 26.98 & 60.43 & 30.06 & 61.62 & 70.64 & 81.93 & 58.17 & 75.76 \\
& CD    
& \cellcolor{gray!20}27.08 & 60.80 
& 26.21 & \cellcolor{gray!20}58.77 
& 54.34 & 68.76 
& \cellcolor{gray!20}50.07 & 53.72 \\
& $\Delta$
& {\color{blue!60!black}+0.37\%} & {\color{blue!60!black}+0.61\%}
& {\color{blue!60!black}-12.81\%} & {\color{blue!60!black}-4.63\%}
& {\color{blue!60!black}-23.07\%} & {\color{blue!60!black}-16.07\%}
& {\color{blue!60!black}-13.92\%} & {\color{blue!60!black}-29.09\%} \\
\hline

\multirow{3}{*}{\textbf{TS2Vec}} 
& ID    
& 23.48 & 64.86 & 32.81 & 61.48 & 67.00 & 83.76 & 48.43 & 72.13 \\
& CD    
& 22.87 & \cellcolor{gray!20}62.00 
& \cellcolor{gray!20}26.99 & 57.54 
& \cellcolor{gray!20}65.08 & \cellcolor{gray!20}82.91 
& 48.15 & \cellcolor{gray!20}56.22 \\
& $\Delta$
& {\color{blue!60!black}-2.60\%} & {\color{blue!60!black}-4.41\%}
& {\color{blue!60!black}-17.74\%} & {\color{blue!60!black}-6.41\%}
& {\color{blue!60!black}-2.87\%} & {\color{blue!60!black}-1.01\%}
& {\color{blue!60!black}-0.58\%} & {\color{blue!60!black}-22.06\%} \\
\hline

\multirow{3}{*}{\textbf{SC-JEPA}} 
& ID    
& 33.58 & 66.08 & 33.64 & 65.41 & 72.89 & 84.95 & 61.61 & 77.85 \\
& CD    
& \textbf{33.48} & \textbf{67.13} & \textbf{33.81} & \textbf{68.88} & \textbf{71.10} & \textbf{82.96} & \textbf{56.87} & \textbf{66.80} \\
& $\Delta$
& {\color{blue!60!black}-0.30\%} & {\color{blue!60!black}+1.59\%}
& {\color{blue!60!black}+0.51\%} & {\color{blue!60!black}+5.30\%}
& {\color{blue!60!black}-2.46\%} & {\color{blue!60!black}-2.34\%}
& {\color{blue!60!black}-7.69\%} & {\color{blue!60!black}-14.19\%} \\
\bottomrule
\end{tabular}%
}

\vspace{-4mm}
\end{table}

\subsection{Generality}
\label{subsec:generalization}
A practical time-series model should remain effective under distribution shifts, where the pre-training data may differ substantially from the deployment environment. To assess such transferability, Table~\ref{exp:generalization} reports generality results under two pre-training settings. In-domain pre-training uses the official training set of the target dataset, while cross-domain pre-training uses the union of the other three datasets, with the target dataset fully excluded during pre-training. In both cases, we keep the downstream evaluation protocol identical on the target dataset, so the only difference comes from the source of pre-training data. We select TS2Vec and PatchTST for this generality study since they perform strongly in our main table under our setting, and thus provide meaningful SOTA references for evaluating transferability.

With this setup, Table~\ref{exp:generalization} shows that cross-domain transfer is generally more challenging, and performance can decrease when the pre-training data comes from different datasets, which is consistent with the presence of domain shift. Notably, SC-JEPA continues to perform competitively across benchmarks. We observe that its AUC remains comparable to the in-domain setting on several datasets and can improve modestly in some cases, suggesting that the learned scores preserve useful ranking information even when the target dataset is excluded during pre-training. On more challenging data such as PSM, all methods experience a clear drop, yet SC-JEPA retains relatively strong performance compared to the evaluated backbones. Overall, these results provide additional evidence that the model transfers reasonably well across datasets, particularly when assessed through threshold-free ranking metrics.

\begin{table}[t]
\centering
\caption{Ablation results on five benchmark datasets (\%).}
\label{exp:ablation}

\setlength{\tabcolsep}{2pt}
\renewcommand{\arraystretch}{1.15}
\resizebox{\columnwidth}{!}{%
\begin{tabular}{l|cc|cc|cc|cc|cc}
\noalign{\hrule height 1.2pt}
\textbf{Model Variant}
& \multicolumn{2}{c|}{\textbf{MSL}}
& \multicolumn{2}{c|}{\textbf{SMAP}}
& \multicolumn{2}{c|}{\textbf{SWaT}}
& \multicolumn{2}{c|}{\textbf{PSM}}
& \multicolumn{2}{c}{\textbf{SMD}} \\
\hline
\textbf{Metric}
& F1 & AUC & F1 & AUC & F1 & AUC & F1 & AUC & F1 & AUC \\
\hline

w/o KL Divergence       
& 29.01 & 53.90 & 28.71 & 60.19 & 67.33 & 79.95 & 54.93 & 70.83 
& 12.63 & 53.61 \\

w/o Reconstruction   
& 25.96 & 52.68 & 25.15 & 52.80 & 14.77 & 53.30 & 53.03 & 69.84 
& 10.18 & 50.44 \\

w/o Predictive Objective          
& 28.88 & 53.17 & 27.05 & 59.81 & 59.47 & 78.60 & 58.89 & 70.17 
& 10.04 & 50.24 \\
\rowcolor{gray!15}
w/o Codebook Loss     
& 31.62 & 58.93 & 31.53 & 62.06 & 72.64 & 82.63 & 60.83 & 75.85 
& 12.82 & 57.36 \\
\rowcolor{gray!30}
w/o Codebook Module  
& 21.82 & 43.02 & 21.69 & 51.00 & 11.51 & 50.00 & 53.03 & 46.61 
& 12.45 & 50.00 \\

w/o Temporal Downsampling          
& 28.77 & 63.16 & 30.39 & 61.34 & 71.62 & 83.60 & 60.79 & 71.72 
& 15.50 & 60.93 \\

\hline
\rowcolor{scjepagreen}
\textbf{Full Model}
& \textbf{33.58} & \textbf{66.08}
& \textbf{33.64} & \textbf{65.41}
& \textbf{72.89} & \textbf{84.95}
& \textbf{61.61} & \textbf{77.85}
& \textbf{20.82} & \textbf{62.29} \\

\noalign{\hrule height 1.2pt}
\end{tabular}}
\vspace{-4mm}
\end{table}

\subsection{Model Analysis}

\subsubsection{Ablation Study}
Since SC-JEPA integrates multiple mechanisms, it is important to attribute the observed gains to the intended design choices rather than a single component. We therefore conduct a systematic ablation study by selectively removing modules, as summarized in Table~\ref{exp:ablation}. We observe that the predictive objective is the primary engine for capturing temporal dependencies; removing it causes a significant drop, and the KL term proves superior to MSE alone by enforcing precise distributional alignment. Furthermore, the results validate that multi-resolution modeling is essential for outperforming baselines, as it enables the capture of coarse-grained precursors that single-scale models miss. Conversely, reconstruction serves as a non-negotiable anchor; its removal triggers model collapse, confirming its role in preventing latent degeneration.

Among all variants, the most consequential one is removing the soft codebook bottleneck (w/o Codebook Module):
performance collapses to near-random, while removing only the auxiliary regularizers yields a much smaller drop. It suggests that beyond providing discrete, prototype-anchored regime codes, the bottleneck may also play a critical role in optimization stability.
This naturally raises a question: what stabilizing effect does the bottleneck introduce that the auxiliary losses alone do not?

The ablations offer an empirical explanation consistent with our formulation.
With the codebook in place, self-distillation operates as KL matching between probability codes on the simplex (Eq.~\ref{eq:softassign}),
so the predictor tracks a bounded distributional trajectory instead of performing unconstrained regression in $\mathbb{R}^D$.
In conjunction with slowly varying EMA targets, this bounded code space appears to provide a more stable reference for self-distillation~\cite{zhou2022ibot}.
Once we remove the bottleneck, this constrained code space disappears, so KL alignment loses its natural domain,
and training degenerates into an ill-conditioned matching problem, consistent with the observed near-collapse.
By contrast, removing only the auxiliary codebook losses causes only a mild drop for w/o Codebook Loss.
These auxiliary losses mainly serve as an outperformance mechanism that further boosts an already strong model.

%---------------
%var check
\begin{table}[t]
\centering
\caption{Variance ratio comparison on three datasets. Lower values indicate less representation collapse; gray text shows the relative decrease versus the variant without codebook.}
\label{tab:variance_ratio}
\renewcommand{\arraystretch}{1.12}

\resizebox{\linewidth}{!}{

\begin{tabular}{@{} l l l@{\hspace{2mm}}r l@{\hspace{2mm}}r l@{\hspace{2mm}}r @{}}
\toprule
\textbf{Dataset} & \textbf{Method} & \multicolumn{2}{l}{\textbf{Top-1 $\downarrow$}} & \multicolumn{2}{l}{\textbf{Top-5 $\downarrow$}} & \multicolumn{2}{l}{\textbf{Top-10 $\downarrow$}} \\
\midrule
PSM 
& without codebook 
& 0.5334 & 
& 1.0000 & 
& 1.0000 & \\
& with codebook (ours) 
& 0.1765 & {\scriptsize\color{gray}($\downarrow$66.9\%)} 
& 0.4347 & {\scriptsize\color{gray}($\downarrow$56.5\%)} 
& 0.6778 & {\scriptsize\color{gray}($\downarrow$32.2\%)} \\
\cmidrule(lr){1-8}

MSL 
& without codebook 
& 0.3292 & 
& 0.9088 & 
& 0.9154 & \\
& with codebook (ours) 
& 0.1804 & {\scriptsize\color{gray}($\downarrow$45.2\%)} 
& 0.5194 & {\scriptsize\color{gray}($\downarrow$42.8\%)} 
& 0.6607 & {\scriptsize\color{gray}($\downarrow$27.8\%)} \\
\cmidrule(lr){1-8}

SWaT 
& without codebook 
& 0.3251 & 
& 0.9271 & 
& 0.9344 & \\
& with codebook (ours) 
& 0.1143 & {\scriptsize\color{gray}($\downarrow$64.8\%)} 
& 0.3190 & {\scriptsize\color{gray}($\downarrow$65.6\%)} 
& 0.4041 & {\scriptsize\color{gray}($\downarrow$56.8\%)} \\
\bottomrule
\end{tabular}
}
\vspace{-4mm}
\end{table}

\subsubsection{Visualization of Learned Regimes}
\label{sec:vis_analysis}

In the Introduction, we hypothesized that a discrete bottleneck would sharpen the distinction between normal dynamics and structural anomalies. Furthermore, in Section~\ref{archi}, we designed the codebook $\mathcal{Q}$ specifically to capture regime-like patterns via finite prototypes $\{\mathbf{c}_k\}$.
Here, we empirically verify whether the learned representations align with these methodological motivations.

First, we examine the activation statistics of the codebook. 
Figure~\ref{analysis:code-his} reports a quantitative study on the PSM dataset, ranking codes by the absolute frequency gap between normal and anomalous windows. We observe that a small subset of codes exhibits pronounced activation gaps, suggesting that the model naturally learns to partition the latent space into condition-dependent regions without explicit supervision.

To further validate this structure qualitatively, Figure~\ref{analysis:code_top} maps the most discriminative codes back to their original time-series patches. 
We observe that, within the same temporal context, distinct codes consistently correspond to distinct physical behaviors. This alignment confirms that the discrete formulation effectively disentangles diverse local dynamics into a finite set of identifiable regimes. Consequently, the codebook successfully distinguishes precursor-like patterns from normal dynamics, providing a discriminative structural basis for anomaly prediction even under distribution shift.
\begin{figure}[t]  
    \centering
    \includegraphics[width=\linewidth]{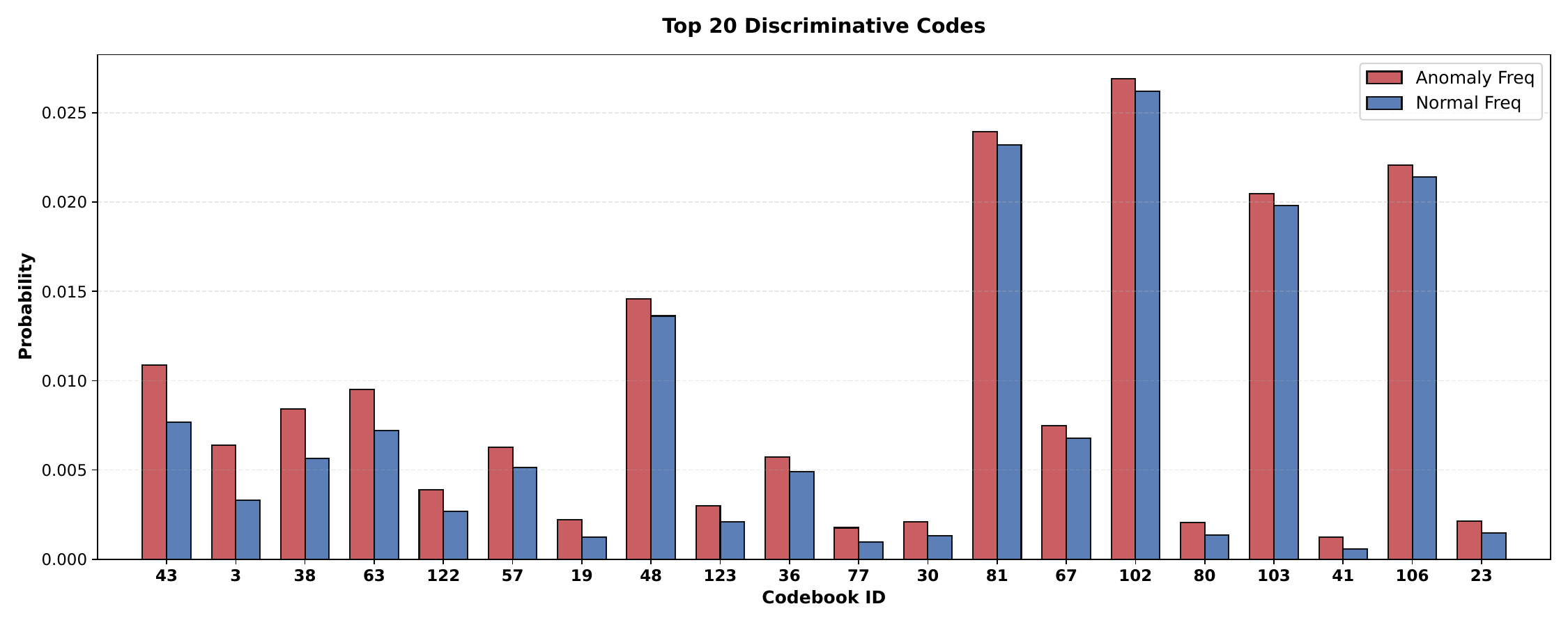}
    \caption{Histogram comparing the occurrence probability of latent codes in anomalous versus normal windows on the PSM test set. The codes are sorted by their anomaly–normal probability difference (from largest to smallest). PSM is used because it provides a large test set with reliable window-level anomaly labels.}
    \label{analysis:code-his} 
    \vspace{-5mm}
\end{figure}

\section{Conclusion}
This work introduces SC-JEPA, advancing anomaly prediction from reactive detection to proactive world modeling. By synergizing multi-resolution views with a soft-quantized bottleneck, our architecture effectively captures scale-variant precursors while mitigating the representation collapse often observed in continuous self-supervision. Our empirical and theoretical analysis reveals that the discrete bottleneck serves as a critical regularizer, enabling stable optimization without explicit negative sampling. Extensive experiments confirm that SC-JEPA achieves state-of-the-art performance in early-warning scenarios, suggesting that discrete latent dynamics offer a robust foundation for future research in reliable autonomous systems.

%\section*{Acknowledgements}
\section*{Impact Statement}
This work advances machine learning methods for early warning from multivariate time series. By improving the stability of latent predictive learning and modeling precursor patterns across temporal scales, SC-JEPA may support earlier identification of emerging failures in industrial, infrastructure, and computing systems. Such capabilities could help reduce downtime, operational cost, and safety risks when deployed with appropriate human oversight. However, inaccurate predictions may lead to missed failures or unnecessary interventions, and performance may degrade under distribution shifts or in domains not represented by the training data. Therefore, the method should be carefully validated for each application and used as decision support rather than as a fully autonomous safety mechanism.

\bibliography{example_paper}
\bibliographystyle{icml2026}

%%%%%%%%%%%%%%%%%%%%%%%%%%%%%%%%%%%%%%%%%%%%%%%%%%%%%%%%%%%%%%%%%%%%%%%%%%%%%%%
%%%%%%%%%%%%%%%%%%%%%%%%%%%%%%%%%%%%%%%%%%%%%%%%%%%%%%%%%%%%%%%%%%%%%%%%%%%%%%%
% APPENDIX
%%%%%%%%%%%%%%%%%%%%%%%%%%%%%%%%%%%%%%%%%%%%%%%%%%%%%%%%%%%%%%%%%%%%%%%%%%%%%%%
%%%%%%%%%%%%%%%%%%%%%%%%%%%%%%%%%%%%%%%%%%%%%%%%%%%%%%%%%%%%%%%%%%%%%%%%%%%%%%%
\newpage
\appendix
\onecolumn

\appendix

\section*{Appendix Contents}

\noindent
\hyperref[app:methodology]{\textbf{A. Methodological Details}}
\dotfill
\pageref{app:methodology}

\par\smallskip
\hspace*{1.5em}
\hyperref[app:notations]{A.1 Notations}
\dotfill
\pageref{app:notations}

\par
\hspace*{1.5em}
\hyperref[app:downavg]{A.2 Time-Axis Downsampling}
\dotfill
\pageref{app:downavg}

\par
\hspace*{1.5em}
\hyperref[app:theory]{A.3 Theoretical Analysis of Latent Stability}
\dotfill
\pageref{app:theory}

\par
\hspace*{3em}
\hyperref[upper-bound]{A.3.1 Stability Upper Bound}
\dotfill
\pageref{upper-bound}

\par
\hspace*{3em}
\hyperref[app:lower_bound]{A.3.2 Non-Collapse Lower Bound}
\dotfill
\pageref{app:lower_bound}

\par\medskip
\noindent
\hyperref[app:reproducibility]{\textbf{B. Reproducibility Details}}
\dotfill
\pageref{app:reproducibility}

\par
\hspace*{1.5em}
\hyperref[app:implementation]{B.1 Implementation Details}
\dotfill
\pageref{app:implementation}

\par
\hspace*{1.5em}
\hyperref[app:pretrain]{B.2 SC-JEPA Pre-training Algorithm}
\dotfill
\pageref{app:pretrain}

\par
\hspace*{1.5em}
\hyperref[app:downstream_protocol]{B.3 Downstream Anomaly Prediction Protocol}
\dotfill
\pageref{app:downstream_protocol}

\par\medskip
\noindent
\hyperref[app:extended_results]{\textbf{C. Experimental Results}}
\dotfill
\pageref{app:extended_results}

\par\smallskip
\hspace*{1.5em}
\hyperref[app:main_table]{C.1 Main Benchmark}
\dotfill
\pageref{app:main_table}

\par
\hspace*{1.5em}
\hyperref[app:generality]{C.2 Generality}
\dotfill
\pageref{app:generality}

\par
\hspace*{1.5em}
\hyperref[app:ablation]{C.3 Ablation Studies}
\dotfill
\pageref{app:ablation}

\par\medskip
\noindent
\hyperref[app:viz]{\textbf{D. Visualizations and Qualitative Analysis}}
\dotfill
\pageref{app:viz}

\par\smallskip
\hspace*{1.5em}
\hyperref[app:framework]{D.1 Framework}
\dotfill
\pageref{app:framework}

\par
\hspace*{1.5em}
\hyperref[app:inference]{D.2 Inference Efficiency}
\dotfill
\pageref{app:inference}

\clearpage

% =========================================================
% Section B: Methodology & Algorithms
% =========================================================
\section{Methodological Details}
\label{app:methodology}

\subsection{Notations}
\label{app:notations}
\begin{table}[H]
\centering
\caption{Categorized notations.}
\label{tab:notation_refined}
\renewcommand{\arraystretch}{1.2}
\resizebox{\textwidth}{!}{%
\begin{tabular}{cll|cll}
\toprule
\textbf{Category} & \textbf{Symbol} & \textbf{Description} & \textbf{Category} & \textbf{Symbol} & \textbf{Description} \\
\midrule
\multirow{8}{*}{Dimensions} 
 & $T, V$ & Time series size ($T{\times}V$) & \multirow{4}{*}{Data Inputs} & $\mathbf{x}$ & Input time series \\
 & $T_w$ & Window length & & $\hat{\mathbf{W}}_t$ & Normalized window \\
 & $N$ & Number of windows & & $\mathbf{X}^{\text{fine}}_t$ & Fine view (patches) \\
 & $P$ & Num. patches ($T_w = P{\cdot}L$) & & $\mathbf{X}^{\text{coarse}}_t$ & Coarse view (downsampled) \\
\cline{4-6}
 & $L$ & Patch length & \multirow{7}{*}{Representations} & $\mathbf{h}_t$ & Encoder latent rep. \\
 & $K$ & Codebook size & & $\mathbf{c}_k$ & Learnable prototype \\
 & $D$ & Embedding dimension & & $\mathbf{p}_{t,i}$ & Code-assignment prob. \\
\cline{1-3}
\multirow{3}{*}{Hyperparams}
 & $\tau$ & Soft-max temperature & & $\boldsymbol{\Pi}_t$ & Code sequence \\
 & $\lambda_{\dots}$ & Loss balancing weights & & $\mathbf{z}_{t,i}$ & Soft-quantized embedding \\
 & $\gamma$ & MSE local weight & & $\mathbf{q}$ & Coarse query token \\
\cline{1-3}
\multirow{5}{*}{Architecture}
 & $\mathcal{E}_\theta$ & Online encoder & & & \\
\cline{4-6}
 & $\mathcal{E}_\xi$ & EMA encoder & \multirow{4}{*}{Outputs} & $\hat{\boldsymbol{\Pi}}^{\text{fine}}$ & Predicted fine code dist. \\
 & $\mathcal{Q}$ & Codebook bottleneck & & $\hat{\boldsymbol{\Pi}}^{\text{coarse}}$ & Predicted coarse code dist. \\
 & $\mathcal{D}$ & Reconstruction decoder & & $\hat{\mathbf{z}}_{t,i}$ & Predicted embedding \\
 & $\operatorname{sg}$ & Stop-gradient operator & & $\hat{\mathbf{X}}_{t,i}$ & Reconstructed patch \\
\cline{4-6}
 & & & \multirow{2}{*}{Objectives} & $\mathcal{L}_{\mathrm{pred/code}}$ & Predictive / Code losses \\
 & & & & $\mathcal{L}_{\mathrm{rec/total}}$ & Recon. / Total losses \\
\bottomrule
\end{tabular}%
}

\end{table}
\subsection{Time-Axis Downsampling}
\label{app:downavg}

\noindent\textbf{DownAvg operator.}
Given a normalized window $\hat{\mathbf{W}}_t \in \mathbb{R}^{T_w\times V}$ with $T_w=P\cdot L$, we define $\operatorname{DownAvg}_P(\cdot)$ as a time-axis downsampling operator that averages every $P$ consecutive time points (stride $P$) for each variable:
\begin{equation}
\mathbf{X}^{\text{coarse}}_t=\operatorname{DownAvg}_P(\hat{\mathbf{W}}_t)\in\mathbb{R}^{1\times L\times V}.
\end{equation}
Equivalently, for within-patch index $\ell\in[L]$ and variable $v\in[V]$:
\begin{equation}
\big[\mathbf{X}^{\text{coarse}}_t\big]_{1,\ell,v}
=\frac{1}{P}\sum_{j=1}^{P}\hat{\mathbf{W}}_t[(\ell-1)P+j,\;v].
\label{eq:downavg_appendix}
\end{equation}

\subsection{Theoretical Analysis of Latent Stability}
\label{app:theory}

In this subsection, we provide the formal proofs for the stability properties of the SC-JEPA latent space. We demonstrate how the soft codebook acts as a geometric constraint that explicitly bounds representation drift.
We provide two complementary guarantees: (i) an upper bound on representation drift (stability),
and (ii) a lower bound that rules out the trivial constant solution (non-collapse).
\subsubsection{Stability Upper Bound}
\label{upper-bound}
This part proves an explicit upper bound on representation drift in embedding space.
The argument follows a short chain: we first fix a bounded latent geometry via the soft codebook,
then establish a probability-to-geometry bridge ($\ell_1 \to \ell_2$), and finally combine this bridge with EMA smoothness and predictive alignment to obtain Theorem~\ref{thm:stability_app}.

\paragraph{Preliminaries.}

Continuous forecasting models can exhibit collapse or explosion, making the latent geometry unstable.
The soft codebook eliminates this degree of freedom by restricting every soft-quantized embedding to the convex hull of a finite prototype set.
In particular, the radius constant $M$ defined below will be the only scale factor that propagates into the final drift bound.

\begin{definition}[Soft Code Map]
\label{def:softmap_app}
Let $K \ge 2$ be the number of discrete prototypes and $D \ge 1$ be the embedding dimension. Let $\mathbf{C} \in \mathbb{R}^{K \times D}$ be the codebook matrix where the $k$-th \textbf{row} corresponds to the prototype $\mathbf{c}_k$.
We define the probability simplex $\Delta^{K-1} \coloneqq \{ \mathbf{p} \in \mathbb{R}^K : \mathbf{p} \succeq \mathbf{0}, \, \mathbf{1}^\top \mathbf{p} = 1 \}$.

The \textbf{soft-quantization map} $z: \Delta^{K-1} \to \mathbb{R}^D$ is defined as the expected embedding:
\begin{equation}
    z(\mathbf{p}) \coloneqq \mathbf{C}^\top \mathbf{p} = \sum_{k=1}^{K} p_k \mathbf{c}_k.
\end{equation}
We further define the \textbf{geometric radius} $M$ as the maximum Euclidean norm of any prototype...
\end{definition}

\paragraph{Lipschitz Bound.}

Training is performed in probability space (via KL divergence), whereas drift and anomaly scores are measured in embedding space.
We therefore need a simple link that converts discrepancies between assignment distributions into Euclidean changes after soft quantization.
Lemma~\ref{lem:lipschitz_app} provides exactly this bridge, with the codebook radius $M$ as the controlling constant.

\begin{lemma}[$\ell_1 \to \ell_2$ Lipschitzness]
\label{lem:lipschitz_app}
For any two assignment distributions $\mathbf{p}, \mathbf{q} \in \Delta^{K-1}$, the Euclidean distance between their soft-quantized embeddings is bounded by their $\ell_1$ distance, scaled by the codebook radius $M$:
\begin{equation}
    \|z(\mathbf{p}) - z(\mathbf{q})\|_2 \le M \|\mathbf{p} - \mathbf{q}\|_1.
\end{equation}
\end{lemma}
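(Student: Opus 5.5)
The approach is to write the difference of soft-quantized embeddings as a single linear map applied to $\mathbf{p}-\mathbf{q}$, and then bound it with the triangle inequality. By Definition~\ref{def:softmap_app}, $z$ is linear in its argument, so
\begin{equation}
z(\mathbf{p}) - z(\mathbf{q}) = \mathbf{C}^\top(\mathbf{p}-\mathbf{q}) = \sum_{k=1}^{K} (p_k - q_k)\,\mathbf{c}_k .
\end{equation}
This identity is where the finite-codebook structure enters: the embedding discrepancy is a weighted sum of prototypes, with weights given by the coordinatewise differences of the two assignment distributions.

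Next I apply the triangle inequality and absolute homogeneity of the Euclidean norm:
\begin{equation}
\|z(\mathbf{p}) - z(\mathbf{q})\|_2 \le \sum_{k=1}^{K} |p_k - q_k|\,\|\mathbf{c}_k\|_2 .
\end{equation}
I then bound each $\|\mathbf{c}_k\|_2$ by the geometric radius $M \coloneqq \max_k \|\mathbf{c}_k\|_2$ from Definition~\ref{def:softmap_app}. This pulls $M$ out of the sum and leaves $\sum_k |p_k-q_k| = \|\mathbf{p}-\mathbf{q}\|_1$, which is the claimed inequality.

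No step is a genuine obstacle. The only point to check is that $M$ is exactly the maximum prototype norm, as the truncated definition indicates, rather than, say, a Frobenius norm of $\mathbf{C}$. If it were the latter, the bound would still hold, since each row norm is at most the Frobenius norm, though it would be looser.

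Two remarks could be added. First, the hypothesis $\mathbf{p},\mathbf{q}\in\Delta^{K-1}$ is not actually used: the bound holds for any $\mathbf{p},\mathbf{q}\in\mathbb{R}^K$. Second, the simplex does permit a sharpening. Since $\mathbf{1}^\top(\mathbf{p}-\mathbf{q})=0$, one can subtract any fixed center $\mathbf{c}_0$ from every prototype without changing the difference, which replaces $M$ by $\max_k\|\mathbf{c}_k-\mathbf{c}_0\|_2$. Splitting $\mathbf{p}-\mathbf{q}$ into its positive and negative parts, each of mass $\tfrac12\|\mathbf{p}-\mathbf{q}\|_1$, similarly yields the constant $\tfrac12\max_{j,k}\|\mathbf{c}_j-\mathbf{c}_k\|_2$. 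Neither refinement is needed for the stated lemma.
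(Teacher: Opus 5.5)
Your proof is correct and matches the paper's argument: write $z(\mathbf{p})-z(\mathbf{q})=\sum_k (p_k-q_k)\mathbf{c}_k$, apply the triangle inequality, and bound each $\|\mathbf{c}_k\|_2$ by $M$. Your side remarks are also correct, namely that the simplex hypothesis is unused and that $\mathbf{1}^\top(\mathbf{p}-\mathbf{q})=0$ lets you sharpen the constant to $\tfrac12\max_{j,k}\|\mathbf{c}_j-\mathbf{c}_k\|_2$.
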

\begin{proof}
We expand the Euclidean distance and apply the triangle inequality.
\begin{align*}
    \|z(\mathbf{p}) - z(\mathbf{q})\|_2 
    &= \left\| \sum_{k=1}^K p_k \mathbf{c}_k - \sum_{k=1}^K q_k \mathbf{c}_k \right\|_2 \\
    &= \left\| \sum_{k=1}^K (p_k - q_k) \mathbf{c}_k \right\|_2 \\
    &\le \sum_{k=1}^K |p_k - q_k| \cdot \|\mathbf{c}_k\|_2 .
\end{align*}
Since $\|\mathbf{c}_k\|_2 \le M$ for all $k$, we have:
\[
    \sum_{k=1}^K |p_k - q_k| \cdot \|\mathbf{c}_k\|_2
    \le M \sum_{k=1}^K |p_k - q_k|
    = M \|\mathbf{p} - \mathbf{q}\|_1.
\]
\end{proof}

With this bridge in place, the remaining step is to control the probability-space differences themselves.
Predictive alignment bounds the deviation between targets and predictions at each time step, while EMA smoothness bounds how fast the target moves.
Combining these two controls yields the following stability bound.

\begin{theorem}[Stability Bound]
\label{thm:stability_app}
Let $\{\hat{\mathbf{p}}_t\}$ and $\{\mathbf{p}_t\}$ be the sequences of predicted and target (EMA) distributions, respectively, and let $\hat{\mathbf{z}}_t = z(\hat{\mathbf{p}}_t)$. Assume the following conditions hold for time steps $t$ and $t+1$:
\begin{enumerate}
    \item \textbf{Predictive Alignment:} $D_{\mathrm{KL}}(\mathbf{p}_t \| \hat{\mathbf{p}}_t) \le \varepsilon_t$ and $D_{\mathrm{KL}}(\mathbf{p}_{t+1} \| \hat{\mathbf{p}}_{t+1}) \le \varepsilon_{t+1}$.
    \item \textbf{Target Smoothness:} $\|\mathbf{p}_{t+1} - \mathbf{p}_t\|_1 \le \delta_t$.
\end{enumerate}
Then, the representation drift $\|\hat{\mathbf{z}}_{t+1} - \hat{\mathbf{z}}_t\|_2$ is bounded by:
\begin{equation}
    \|\hat{\mathbf{z}}_{t+1} - \hat{\mathbf{z}}_t\|_2 \le M \left( \sqrt{2\varepsilon_{t+1}} + \delta_t + \sqrt{2\varepsilon_t} \right).
\end{equation}
\end{theorem}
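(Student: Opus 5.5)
The plan is to reduce everything to probability space via Lemma~\ref{lem:lipschitz_app}, then control the resulting $\ell_1$ distance with a triangle inequality and Pinsker's inequality. Since $\hat{\mathbf{z}}_t = z(\hat{\mathbf{p}}_t)$ with $\hat{\mathbf{p}}_t,\hat{\mathbf{p}}_{t+1}\in\Delta^{K-1}$, Lemma~\ref{lem:lipschitz_app} gives directly
\[
\|\hat{\mathbf{z}}_{t+1}-\hat{\mathbf{z}}_t\|_2 \le M\,\|\hat{\mathbf{p}}_{t+1}-\hat{\mathbf{p}}_t\|_1 .
\]
This is the only place the codebook geometry enters; the radius $M$ factors out, so it remains to show $\|\hat{\mathbf{p}}_{t+1}-\hat{\mathbf{p}}_t\|_1 \le \sqrt{2\varepsilon_{t+1}}+\delta_t+\sqrt{2\varepsilon_t}$.

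To get that, I will insert the two target distributions and apply the triangle inequality in $\ell_1$:
\[
\|\hat{\mathbf{p}}_{t+1}-\hat{\mathbf{p}}_t\|_1 \le \|\hat{\mathbf{p}}_{t+1}-\mathbf{p}_{t+1}\|_1 + \|\mathbf{p}_{t+1}-\mathbf{p}_t\|_1 + \|\mathbf{p}_t-\hat{\mathbf{p}}_t\|_1 .
\]
The middle term is at most $\delta_t$ by the Target Smoothness assumption. For the outer terms I will invoke Pinsker's inequality in its $\ell_1$ form, $\|\mathbf{p}-\mathbf{q}\|_1 \le \sqrt{2 D_{\mathrm{KL}}(\mathbf{p}\|\mathbf{q})}$, which is valid for any two distributions on a finite set. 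Note that it is the $\ell_1$ norm, equal to twice the total variation distance, so no extra factor of $2$ appears. Applying it with $(\mathbf{p},\mathbf{q})=(\mathbf{p}_{t+1},\hat{\mathbf{p}}_{t+1})$ and $(\mathbf{p}_t,\hat{\mathbf{p}}_t)$, and using monotonicity of the square root together with Predictive Alignment, bounds the outer terms by $\sqrt{2\varepsilon_{t+1}}$ and $\sqrt{2\varepsilon_t}$. The direction of the KL divergence, target first, does not matter, since the $\ell_1$ norm is symmetric. Multiplying through by $M$ then gives the stated bound.

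The only real obstacle is getting the Pinsker constant right. The common statement $\mathrm{TV}\le\sqrt{D_{\mathrm{KL}}/2}$ translates to $\|\cdot\|_1 = 2\,\mathrm{TV} \le \sqrt{2D_{\mathrm{KL}}}$, which matches the constant in the theorem, so I will state that version explicitly. I will also note that Pinsker requires no support conditions: if $\hat{\mathbf{p}}$ misses part of the support of $\mathbf{p}$, the KL divergence is infinite and the bound is vacuous. In any case the softmax in Eq.~\ref{eq:softassign} yields strictly positive predictions.
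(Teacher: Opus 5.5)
Your proposal is correct and uses the same ingredients as the paper's proof: Lemma~\ref{lem:lipschitz_app}, Pinsker's inequality in the form $\|\mathbf{p}-\mathbf{q}\|_1\le\sqrt{2D_{\mathrm{KL}}(\mathbf{p}\|\mathbf{q})}$, the target-smoothness assumption, and a three-term triangle inequality through the target quantities at $t+1$ and $t$. The only difference is the order of steps: the paper applies the triangle inequality in $\mathbb{R}^D$ through intermediate target embeddings $\mathbf{z}_{t+1},\mathbf{z}_t$ and then invokes the Lemma on each term, whereas you apply the Lemma once and do the triangle inequality in $\ell_1$. This gives the same bound and has the small advantage of never having to specify which codebook produces the target embeddings.
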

\begin{proof}
First, we recall \textbf{Pinsker's Inequality}~\cite{cover2006elements}, which provides a bound on the $\ell_1$ distance between two distributions in terms of their KL divergence:
\[
    \|\mathbf{p} - \mathbf{q}\|_1 \le \sqrt{2 D_{\mathrm{KL}}(\mathbf{p} \| \mathbf{q})}.
\]

We now decompose the drift term $\|\hat{\mathbf{z}}_{t+1} - \hat{\mathbf{z}}_t\|_2$ using the triangle inequality by introducing the intermediate target embeddings $\mathbf{z}_{t+1}$ and $\mathbf{z}_t$:
\begin{align*}
    \|\hat{\mathbf{z}}_{t+1} - \hat{\mathbf{z}}_t\|_2 
    &= \|\hat{\mathbf{z}}_{t+1} - \mathbf{z}_{t+1} + \mathbf{z}_{t+1} - \mathbf{z}_t + \mathbf{z}_t - \hat{\mathbf{z}}_t\|_2 \\
    &\le \underbrace{\|\hat{\mathbf{z}}_{t+1} - \mathbf{z}_{t+1}\|_2}_{\text{(A)}} 
    + \underbrace{\|\mathbf{z}_{t+1} - \mathbf{z}_t\|_2}_{\text{(B)}} 
    + \underbrace{\|\mathbf{z}_t - \hat{\mathbf{z}}_t\|_2}_{\text{(C)}}.
\end{align*}

\textbf{Bounding (A) and (C):} Using Lemma~\ref{lem:lipschitz_app} and Pinsker's Inequality:
\begin{align*}
    \|\hat{\mathbf{z}}_{t+1} - \mathbf{z}_{t+1}\|_2 
    &\le M \|\hat{\mathbf{p}}_{t+1} - \mathbf{p}_{t+1}\|_1 
    \le M \sqrt{2\varepsilon_{t+1}}.
\end{align*}
By symmetry, $\|\mathbf{z}_t - \hat{\mathbf{z}}_t\|_2 \le M \sqrt{2\varepsilon_t}$.

\textbf{Bounding (B):} Using Lemma~\ref{lem:lipschitz_app} and the smoothness assumption:
\[
    \|\mathbf{z}_{t+1} - \mathbf{z}_t\|_2 
    \le M \|\mathbf{p}_{t+1} - \mathbf{p}_t\|_1 
    \le M \delta_t.
\]

Summing these terms yields the final bound.
\end{proof}

\paragraph{Interpretation.}
This theorem implies that the sensitivity of our anomaly score is strictly controlled by the codebook radius $M$. Unlike unconstrained continuous models where gradients can cause unbounded output shifts (effectively $M \to \infty$), SC-JEPA guarantees that any significant representation drift arises from either a true shift in the target signal ($\delta_t$) or a high prediction error ($\varepsilon$). This effectively filters out false positives caused by trivial numerical perturbations.

\subsubsection{Non-Collapse Lower Bound}
\label{app:lower_bound}

\paragraph{Goal.}
We provide a simple \emph{non-collapse certificate} for the soft codebook.
Namely, if a mini-batch exhibits (i) non-trivial multi-code usage, (ii) sufficiently sharp assignments,
and (iii) non-negligible separation between two frequently used prototypes,
then the soft-quantized embeddings $\mathbf{z}_i=\mathbf{C}^\top\mathbf{p}_i$ have strictly positive batch variance,
i.e., $\operatorname{Tr}(\operatorname{Cov}(\{\mathbf{z}_i\}))>0$.

\paragraph{Setup.}
Consider a batch of assignment probabilities $\{\mathbf{p}_i\}_{i=1}^B \subset \Delta^{K-1}$ and their average
$\bar{\mathbf{p}} \coloneqq \frac{1}{B}\sum_{i=1}^B \mathbf{p}_i$.
Let $\mathbf{z}_i \coloneqq \mathbf{C}^\top \mathbf{p}_i \in \mathbb{R}^D$, where $\mathbf{C}\in\mathbb{R}^{K\times D}$ stacks prototypes as rows.

To preclude the constant-code corner case, we assume three mild properties.
Crucially, these are not arbitrary assumptions but are explicitly enforced by the optimization objectives defined in ~\Cref{sec:loss}:
\begin{itemize}[leftmargin=*, noitemsep, topsep=0pt]
    \item \textbf{Batch Entropy:} $\mathcal{L}_{\mathrm{ent}}^{\mathrm{batch}}$ maximizes $H(\bar{\mathbf{p}})$, enforcing Assumption~\ref{assum:batchent}.
    \item \textbf{Sharpness:} $\mathcal{L}_{\mathrm{ent}}^{\mathrm{sample}}$ minimizes individual entropy, driving $\mathbf{p}_i$ toward one-hot vectors (Assumption~\ref{assum:sharp2}).
    \item \textbf{Separation:} $\mathcal{L}_{\mathrm{com}}$ and EMA updates prevent prototype collapse (Assumption~\ref{assum:sep_mr}).
\end{itemize}

\begin{assumption}[Multi-code usage / non-trivial batch marginal]
\label{assum:batchent}
The batch marginal $\bar{\mathbf{p}}$ is not concentrated on a single index. Concretely,
\begin{equation}
H(\bar{\mathbf{p}})\ge \eta \quad \text{for some }\eta>0.
\end{equation}
\end{assumption}

\begin{assumption}[Assignment sharpness]
\label{assum:sharp2}
For each sample, let $k(i)=\arg\max_k p_{i,k}$.
There exists $\varepsilon\in[0,1)$ such that
\begin{equation}
\|\mathbf{p}_i-\mathbf{e}_{k(i)}\|_1 \le \varepsilon
\quad\text{for all } i .
\end{equation}
\end{assumption}

\begin{remark}
The deviation $\varepsilon$ is strictly controlled by the softmax temperature $\tau$. As $\tau \to 0$, $\mathbf{p}_i$ approaches a hard assignment, ensuring $\varepsilon \to 0$.
\end{remark}

\begin{assumption}[Separation for the two dominant codes]
\label{assum:sep_mr}
Define
\begin{equation}
\beta \coloneqq \frac{1-\rho^*}{K-1},
\end{equation}
where $\rho^* \in (1/K, 1)$ is the unique root of the equation $\mathcal{H}_b(\rho) + (1-\rho)\log(K-1) = \eta$.
Let $m=\arg\max_k \bar p_k$ and let $r\neq m$ be any index satisfying $\bar p_r\ge \beta$ (guaranteed by Lemma~\ref{lem:twoactive}).
Assume their prototypes satisfy
\begin{equation}
\|\mathbf{c}_m-\mathbf{c}_r\|_2 \ge \Delta_c
\quad\text{for some }\Delta_c>0.
\end{equation}
\end{assumption}

To drive a variance lower bound, we first (Lemma~\ref{lem:twoactive}) extract two non-negligible codes from entropy, then (Lemma~\ref{lem:freq}) convert their marginal mass into two dominant-code groups under sharp assignments.

\begin{lemma}[Entropy lower bound implies at least two active codes]
\label{lem:twoactive}
Let $m=\arg\max_k \bar p_k$ and $\bar p_{\max}=\bar p_m$.
If $H(\bar{\mathbf{p}})\ge \eta$, then $\bar p_{\max}\le \rho^*$,
where $\rho^*$ is defined in Assumption~\ref{assum:sep_mr}.
Moreover, there exists an index $r\neq m$ such that
\begin{equation}
\bar p_r \;\ge\; \beta \;\coloneqq\; \frac{1-\rho^*}{K-1}.
\end{equation}
\end{lemma}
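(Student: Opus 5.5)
The plan is to reduce the entropy constraint to a constraint on the largest coordinate via a one-dimensional maximum-entropy function, and then finish with a pigeonhole argument on the remaining mass. Define
\[
g(\rho) \coloneqq \mathcal{H}_b(\rho) + (1-\rho)\log(K-1), \qquad \rho\in[1/K,1],
\]
which is exactly the function whose root defines $\rho^*$ in Assumption~\ref{assum:sep_mr}. The steps are (i) show $H(\bar{\mathbf{p}})\le g(\bar p_{\max})$, (ii) show $g$ is strictly decreasing on $[1/K,1]$, (iii) combine these to get $\bar p_{\max}\le\rho^*$, and (iv) extract the second index $r$.

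\textbf{Step 1 (entropy bound at fixed maximum).} Write $\rho=\bar p_{\max}$. Note that $\rho\ge 1/K$ automatically, since $\rho$ is the largest of $K$ nonnegative numbers summing to one. Use the grouping (chain) rule for entropy: split $\bar{\mathbf{p}}$ into the event "index $m$" versus "some other index". This gives
\[
H(\bar{\mathbf{p}}) = \mathcal{H}_b(\rho) + (1-\rho)\,H(\mathbf{q}),
\]
where $\mathbf{q}$ is the conditional distribution of $\bar{\mathbf{p}}$ on the remaining $K-1$ indices. If $\rho=1$ the second term vanishes and the bound is trivial. Otherwise, since $\mathbf{q}$ lives on $K-1$ points, $H(\mathbf{q})\le\log(K-1)$, and therefore $H(\bar{\mathbf{p}})\le g(\rho)$. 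Equality holds when the rest is uniform, which is consistent with $\rho$ being the maximum precisely because $\rho\ge 1/K$.

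\textbf{Step 2 (monotonicity).} Differentiate $g$:
\[
g'(\rho) = \log\frac{1-\rho}{\rho} - \log(K-1) = \log\frac{1-\rho}{(K-1)\rho}.
\]
This is strictly negative for $\rho>1/K$. Hence $g$ decreases strictly and continuously from $g(1/K)=\log K$ to $g(1)=0$. Consequently, for any $\eta\in(0,\log K)$ the equation $g(\rho)=\eta$ has a unique root $\rho^*\in(1/K,1)$, matching the definition in the assumption. Note that $\eta\le\log K$ is forced anyway, because $H(\bar{\mathbf{p}})\le\log K$; the boundary case $\eta=\log K$ would give the degenerate value $\rho^*=1/K$.

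\textbf{Step 3 (bound on the maximum).} Suppose, for contradiction, that $\bar p_{\max}>\rho^*$. Then by Steps 1 and 2,
\[
\eta \le H(\bar{\mathbf{p}}) \le g(\bar p_{\max}) < g(\rho^*) = \eta,
\]
which is impossible. Hence $\bar p_{\max}\le\rho^*$.

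\textbf{Step 4 (a second active code).} The indices other than $m$ carry total mass $1-\bar p_{\max}\ge 1-\rho^*$, shared among $K-1$ indices. The largest of them, call it $r$, therefore satisfies
\[
\bar p_r \ge \frac{1-\rho^*}{K-1} = \beta.
\]

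\textbf{Main obstacle.} I expect the only delicate point to be Step 1: justifying that the uniform completion of the non-maximal mass is admissible, and handling the edge cases $\rho=1$ and $\eta\to\log K$ so that $\rho^*$ is well defined. The grouping identity, combined with the trivial bound $H(\mathbf{q})\le\log(K-1)$, sidesteps any constrained optimization, so I would present it that way.
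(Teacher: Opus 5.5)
Your proposal is correct and follows the same route as the paper. You bound $H(\bar{\mathbf{p}})$ by $\mathcal{H}_b(x)+(1-x)\log(K-1)$ evaluated at $x=\bar p_{\max}$, use its strict monotonicity on $[1/K,1]$ to get $\bar p_{\max}\le\rho^*$, and finish by pigeonhole on the remaining mass. Your grouping-rule derivation of the entropy bound and your explicit derivative computation make rigorous the two steps the paper states as ``well-known'' and attributes to Fano's inequality.
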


\begin{proof}
Consider the function $\phi(x) \coloneqq \mathcal{H}_b(x) + (1-x)\log(K-1)$, which represents the maximum entropy achievable by a distribution with maximum probability mass $x$.
It is well-known (from Fano's inequality analysis) that $\phi(x)$ is strictly decreasing for $x \in [1/K, 1]$.
Since $H(\bar{\mathbf{p}}) \ge \eta$ and $H(\bar{\mathbf{p}}) \le \phi(\bar p_{\max})$, we have $\phi(\bar p_{\max}) \ge \eta = \phi(\rho^*)$.
By the monotonicity of $\phi$, this implies $\bar p_{\max} \le \rho^*$.
Consequently, the remaining mass is at least $1-\rho^*$. By the pigeonhole principle, at least one non-dominant index $r$ must satisfy $\bar p_r \ge (1-\rho^*)/(K-1) = \beta$.
\end{proof}

\begin{lemma}[Marginal mass transfers to dominant-code frequency]
\label{lem:freq}
Define $\hat{\pi}_k \coloneqq \frac{1}{B}\sum_{i=1}^B \mathbb{I}[k(i)=k]$.
Under Assumption~\ref{assum:sharp2}, we have $\|\bar{\mathbf{p}}-\hat{\boldsymbol{\pi}}\|_1 \le \varepsilon$,
hence for all $k$,
\begin{equation}
\hat{\pi}_k \ge \bar p_k - \varepsilon.
\end{equation}
\end{lemma}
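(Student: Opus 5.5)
The plan is to compare $\bar{\mathbf{p}}$ and $\hat{\boldsymbol{\pi}}$ coordinate by coordinate through the one-hot vectors $\mathbf{e}_{k(i)}$. By definition, $\hat{\boldsymbol{\pi}} = \frac{1}{B}\sum_{i=1}^B \mathbf{e}_{k(i)}$, since the $k$-th entry of $\mathbf{e}_{k(i)}$ is exactly $\mathbb{I}[k(i)=k]$. Both $\bar{\mathbf{p}}$ and $\hat{\boldsymbol{\pi}}$ are therefore batch averages over the same index set, and their difference can be written as
\begin{equation*}
\bar{\mathbf{p}}-\hat{\boldsymbol{\pi}} = \frac{1}{B}\sum_{i=1}^B \bigl(\mathbf{p}_i - \mathbf{e}_{k(i)}\bigr).
\end{equation*}

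Next I apply the triangle inequality for the $\ell_1$ norm, which is the same convexity step used in Lemma~\ref{lem:lipschitz_app}. It gives $\|\bar{\mathbf{p}}-\hat{\boldsymbol{\pi}}\|_1 \le \frac{1}{B}\sum_{i}\|\mathbf{p}_i-\mathbf{e}_{k(i)}\|_1$. Assumption~\ref{assum:sharp2} bounds each summand by $\varepsilon$, so the average is at most $\varepsilon$. This proves the first claim.

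For the coordinatewise statement, I use that any single coordinate is dominated by the full $\ell_1$ norm. Hence $|\bar p_k - \hat\pi_k| \le \|\bar{\mathbf{p}}-\hat{\boldsymbol{\pi}}\|_1 \le \varepsilon$, and in particular $\hat\pi_k \ge \bar p_k - \varepsilon$ for every $k$.

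The argument has no real obstacle. The only detail to check is ties in $\arg\max_k p_{i,k}$: I fix any deterministic tie-breaking rule, and the argument goes through unchanged, because Assumption~\ref{assum:sharp2} is stated for whichever $k(i)$ is chosen. A sharper bound via $\|\mathbf{p}_i - \mathbf{e}_{k(i)}\|_1 = 2(1-p_{i,k(i)})$ is available, but it is not needed.
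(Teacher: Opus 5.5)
Your proof is correct and follows the paper's argument essentially verbatim: you write $\bar{\mathbf{p}}-\hat{\boldsymbol{\pi}}$ as the batch average of $\mathbf{p}_i-\mathbf{e}_{k(i)}$, bound its $\ell_1$ norm by $\varepsilon$ via the triangle inequality and Assumption~\ref{assum:sharp2}, and then dominate each coordinate by the $\ell_1$ norm. Your remarks on tie-breaking and on the identity $\|\mathbf{p}_i-\mathbf{e}_{k(i)}\|_1 = 2(1-p_{i,k(i)})$ are accurate, though the proof does not need them.
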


\begin{proof}
\[
\|\bar{\mathbf{p}}-\hat{\boldsymbol{\pi}}\|_1
=
\left\|\frac{1}{B}\sum_{i=1}^B (\mathbf{p}_i-\mathbf{e}_{k(i)})\right\|_1
\le
\frac{1}{B}\sum_{i=1}^B \|\mathbf{p}_i-\mathbf{e}_{k(i)}\|_1
\le \varepsilon.
\]
The coordinate-wise bound follows from $|\bar p_k-\hat\pi_k|\le \|\bar{\mathbf{p}}-\hat{\boldsymbol{\pi}}\|_1$.
\end{proof}

We will use the following prototype-proximity bound implied by sharp assignments:
for any $i$,
\begin{equation}
\label{eq:proto_close_oneline}
\|\mathbf{z}_i-\mathbf{c}_{k(i)}\|_2
\le
\Big(\max_k\|\mathbf{c}_k\|_2\Big)\,\|\mathbf{p}_i-\mathbf{e}_{k(i)}\|_1
\le M\varepsilon,
\end{equation}
where $M\coloneqq\max_k\|\mathbf{c}_k\|_2$.

\begin{theorem}[Explicit variance lower bound / non-collapse]
\label{thm:var_lower2}
Under Assumptions~\ref{assum:batchent}--\ref{assum:sep_mr}, let $\beta$ be defined as in Assumption~\ref{assum:sep_mr} (i.e., $\beta = \frac{1-\rho^*}{K-1}$), and define
\[
\alpha \coloneqq \max\{\beta-\varepsilon,\,0\}.
\]

Let $(m,r)$ be the indices specified in Assumption~\ref{assum:sep_mr}.
If $\alpha>0$ and $2M\varepsilon<\Delta_c$, then the batch covariance of $\{\mathbf{z}_i\}_{i=1}^B$ obeys
\begin{equation}
\label{eq:var_lower}
\operatorname{Tr}\!\big(\operatorname{Cov}(\{\mathbf{z}_i\})\big)
\;\ge\;
\alpha^2\,(\Delta_c-2M\varepsilon)^2
\;>\;0.
\end{equation}
\end{theorem}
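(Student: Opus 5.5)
The plan is to use the pairwise form of the batch variance and to lower-bound only the cross terms between the two dominant-code groups supplied by Lemma~\ref{lem:twoactive} and Lemma~\ref{lem:freq}. With the $1/B$ normalization, the identity
\[
\operatorname{Tr}\big(\operatorname{Cov}(\{\mathbf{z}_i\})\big)=\frac{1}{B}\sum_{i=1}^B\|\mathbf{z}_i-\bar{\mathbf{z}}\|_2^2=\frac{1}{2B^2}\sum_{i,j=1}^B\|\mathbf{z}_i-\mathbf{z}_j\|_2^2
\]
holds. With the unbiased $1/(B-1)$ normalization the trace is only larger, so the bound still holds. Every summand is nonnegative, so it suffices to keep the pairs with one index in each of two disjoint index sets.

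Define the groups $G_m\coloneqq\{i:k(i)=m\}$ and $G_r\coloneqq\{i:k(i)=r\}$, with $(m,r)$ as in Assumption~\ref{assum:sep_mr}. These are disjoint because $m\neq r$, and $|G_k|=B\hat\pi_k$. By Lemma~\ref{lem:twoactive}, $\bar p_r\ge\beta$. Since $m$ maximizes $\bar p$, also $\bar p_m\ge\bar p_r\ge\beta$. Lemma~\ref{lem:freq} then gives $\hat\pi_m,\hat\pi_r\ge\beta-\varepsilon$, hence both are at least $\alpha>0$. This is where the hypothesis $\alpha>0$ enters: it guarantees that both groups are nonempty and occupy a fixed fraction of the batch.

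Next, separate the groups geometrically. For $i\in G_m$ and $j\in G_r$, the prototype-proximity bound \eqref{eq:proto_close_oneline} gives $\|\mathbf{z}_i-\mathbf{c}_m\|_2\le M\varepsilon$ and $\|\mathbf{z}_j-\mathbf{c}_r\|_2\le M\varepsilon$. The reverse triangle inequality then yields
\[
\|\mathbf{z}_i-\mathbf{z}_j\|_2\ \ge\ \|\mathbf{c}_m-\mathbf{c}_r\|_2-2M\varepsilon\ \ge\ \Delta_c-2M\varepsilon.
\]
This lower bound is strictly positive by the hypothesis $2M\varepsilon<\Delta_c$, so squaring preserves the inequality.

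Finally, combine the pieces. Keep only the ordered pairs $(i,j)\in G_m\times G_r$ and $(i,j)\in G_r\times G_m$ in the double sum. This gives
\[
\operatorname{Tr}(\operatorname{Cov})\ \ge\ \frac{1}{2B^2}\cdot 2\,|G_m|\,|G_r|\,(\Delta_c-2M\varepsilon)^2\ =\ \hat\pi_m\hat\pi_r(\Delta_c-2M\varepsilon)^2\ \ge\ \alpha^2(\Delta_c-2M\varepsilon)^2>0.
\]
The main point needing care is bookkeeping rather than depth. One must check that $\bar p_m\ge\beta$, which follows from the maximality of $m$, so that Lemma~\ref{lem:freq} applies to both indices. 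One must also fix the covariance normalization so that the factor $\tfrac12$ cancels exactly against the two orderings of each cross pair.
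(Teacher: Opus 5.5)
Your proof is correct and follows essentially the same route as the paper. Both extract the two dominant-code groups via Lemmas~\ref{lem:twoactive} and~\ref{lem:freq} (using maximality of $m$ to get $\bar p_m\ge\beta$), separate the groups with the prototype-proximity bound and the reverse triangle inequality, and lower-bound the pairwise form of the variance by its cross-group terms. Your finite-batch bookkeeping with $\frac{1}{2B^2}\sum_{i,j}$, the two orderings, and the $1/(B-1)$ remark is simply an explicit version of the paper's independent-copy identity $\operatorname{Tr}(\operatorname{Cov}(\mathbf{Z}))=\frac12\mathbb{E}\|\mathbf{Z}-\mathbf{Z}'\|_2^2$.
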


\begin{proof}
\emph{Step 1 (two codes have non-trivial dominant frequency).}
Let $A=\{i:\,k(i)=m\}$ and $R=\{i:\,k(i)=r\}$.
By Lemma~\ref{lem:twoactive}, $\bar p_r\ge \beta$. Since $m=\arg\max_k \bar p_k$, we also have $\bar p_m\ge \bar p_r\ge \beta$.
Applying Lemma~\ref{lem:freq} yields:
\begin{align*}
\Pr(i\in A) &= \hat\pi_m \ge \bar p_m-\varepsilon \ge \beta-\varepsilon=\alpha, \\[1ex]
\Pr(i\in R) &= \hat\pi_r \ge \bar p_r-\varepsilon \ge \beta-\varepsilon=\alpha.
\end{align*}

\emph{Step 2 (dominant-code membership implies proximity to the corresponding prototype).}
By \eqref{eq:proto_close_oneline} and Assumption~\ref{assum:sharp2}, we have tight concentration around prototypes:
\[
i \in A \implies \|\mathbf{z}_i-\mathbf{c}_m\|_2 \le M\varepsilon, \qquad
j \in R \implies \|\mathbf{z}_j-\mathbf{c}_r\|_2 \le M\varepsilon.
\]

\emph{Step 3 (two separated prototype-centered clusters imply positive covariance trace).}
For any $i\in A$ and $j\in R$, the triangle inequality gives:
\begin{align*}
\|\mathbf{z}_i-\mathbf{z}_j\|_2
&\ge
\|\mathbf{c}_m-\mathbf{c}_r\|_2 - \|\mathbf{z}_i-\mathbf{c}_m\|_2 - \|\mathbf{z}_j-\mathbf{c}_r\|_2 \\
&\ge
\Delta_c - 2M\varepsilon.
\end{align*}
Now view $\mathbf{Z}$ as a random variable obtained by uniformly sampling an index from the batch, and let $\mathbf{Z}'$ be an independent copy.
The pairwise-variance identity yields
$
\operatorname{Tr}(\operatorname{Cov}(\mathbf{Z}))
=
\frac{1}{2}\,\mathbb{E}\big[\|\mathbf{Z}-\mathbf{Z}'\|_2^2\big].
$
Restricting the expectation to the event $\{i\in A,\ j\in R\}$ yields the bound:
\begin{align*}
\operatorname{Tr}(\operatorname{Cov}(\mathbf{Z}))
&\ge
\frac{1}{2}\cdot 2\,\Pr(i\in A)\Pr(j\in R)\,(\Delta_c-2M\varepsilon)^2 \\[1ex]
&\ge
\alpha^2(\Delta_c-2M\varepsilon)^2.
\end{align*}
If $\alpha>0$ and $2M\varepsilon<\Delta_c$, the right-hand side is strictly positive, ruling out collapse.
\end{proof}

\paragraph{Remark.}
Theorem~\ref{thm:var_lower2} provides a certificate against collapse. The bound becomes vacuous only if
(i) $\alpha=0$ (low batch entropy), or (ii) $\Delta_c\le 2M\varepsilon$ (low sharpness or prototype separation).
However, these conditions are explicitly encouraged by the batch-entropy, sample-entropy, and prototype-alignment objectives introduced in Section~3.4.

% =========================================================
% Section A: Reproducibility Details
% =========================================================
\section{Reproducibility Details}
\label{app:reproducibility}

\subsection{Implementation Details}
\label{app:implementation}
\paragraph{Data Preprocessing.}
For each dataset, we identify and remove zero-variance channels using only the official training split, and apply the same channel mask to the corresponding test split. The remaining variables are normalized independently within each window using RevIN. We construct context--target pairs from consecutive non-overlapping windows of length $100$, where the current window $\mathbf{X}_t$ serves as input and the subsequent window $\mathbf{X}_{t+1}$ as the prediction target. Each window is partitioned into $P{=}5$ non-overlapping patches of length $L{=}20$. For SMD, each server machine is processed independently, and the resulting windows are aggregated for training and evaluation.

All methods are implemented in PyTorch and trained on a single NVIDIA GeForce RTX 4090 GPU. We use the Adam optimizer throughout and, unless otherwise specified, keep the reported hyperparameters fixed across datasets for controlled comparison. We set the latent dimension to $D{=}256$ and the codebook size to $K{=}128$, and maintain EMA target networks for both the encoder and codebook.

After each optimization step on the online parameters $\theta$, the corresponding target parameters $\xi$ are updated as
\[
\xi \leftarrow \rho \xi + (1-\rho)\theta,
\]
with decay $\rho{=}0.996$. Gradients are not propagated through the EMA target branch.

\textbf{Model Architectures.}
\begin{itemize}
    \item \textbf{Encoder:} Patch size $L{=}20$. The projection head is a 2-layer MLP (hidden $64{\to}32$) mapping to latent dimension $D{=}256$. The backbone is a 6-layer Transformer (8 heads, dropout $0.1$).
    \item \textbf{Codebook:} Codebook size $K{=}128$, dimension $D{=}256$. Soft quantization temperature $\tau{=}0.1$.
    \item \textbf{Predictors:} Both fine and coarse predictors are 2-layer Transformers (4 heads, hidden dimension $128$).
\end{itemize}

\textbf{Optimization \& Hyperparameters.}
\begin{itemize}
    \item \textbf{Training:} Adam optimizer (lr $5{\times}10^{-4}$, weight decay $10^{-5}$), batch size $128$. Target networks are updated via EMA (decay $0.996$). Gradient clipping norm is $0.5$. We train for a maximum of 100 epochs, initiating model selection after 50 epochs with an early stopping patience of 10.
   \item \textbf{Loss Weights:} Predicted objective weights $\lambda_f{=}1.0$, $\lambda_c{=}0.5$, and $\gamma{=}0.1$. Codebook alignment weights $\lambda_{\mathrm{emb}}{=}1.0$ and $\lambda_{\mathrm{com}}{=}0.25$. Entropy regularization weights $\lambda_{\mathrm{ent}}^{\mathrm{sample}}{=}0.005$ and $\lambda_{\mathrm{ent}}^{\mathrm{batch}}{=}0.01$. Reconstruction weight $\lambda_r$ is annealed linearly from $0.5$ to $0.1$ during training.
 
\end{itemize}

\subsection{SC-JEPA Pre-training Algorithm}
\label{app:pretrain}

We provide the detailed formulation of the total training objective for reproducibility.
Following Section~\ref{sec:loss}, the total loss is

\begin{align*}
\mathcal{L}
&=
\underbrace{
\lambda_f\!\left(
\mathcal{L}_{\mathrm{KL}}^{\mathrm{fine}}
+\gamma\mathcal{L}_{\mathrm{MSE}}^{\mathrm{fine}}
\right)
+\lambda_c\mathcal{L}_{\mathrm{KL}}^{\mathrm{coarse}}
}_{\text{Predictive Objective}}
\\
&\quad+
\underbrace{
\lambda_{\mathrm{emb}}\mathcal{L}_{\mathrm{emb}}
+\lambda_{\mathrm{com}}\mathcal{L}_{\mathrm{com}}
+\lambda_{\mathrm{ent}}^{\mathrm{sample}}
\mathcal{L}_{\mathrm{ent}}^{\mathrm{sample}}
-\lambda_{\mathrm{ent}}^{\mathrm{batch}}
\mathcal{L}_{\mathrm{ent}}^{\mathrm{batch}}
}_{\text{Codebook Objective}}
+
\underbrace{
\lambda_r\mathcal{L}_{\mathrm{rec}}
}_{\text{Reconstruction}}.
\end{align*}
Algorithm~\ref{alg:pretrain_app} summarizes the pre-training procedure.

\begin{algorithm}[H]
\caption{SC-JEPA Pre-training Procedure}
\label{alg:pretrain_app}
\begin{algorithmic}[1]
\REQUIRE Dataset $\mathcal{D}$, online parameters $\theta$, target parameters $\xi$, EMA decay $\rho$

\WHILE{not converged}
    \STATE Sample $(\mathbf{X}_t,\mathbf{X}_{t+1})\sim\mathcal{D}$
    \hfill $\triangleright$ context--future pair

    \STATE $\mathbf{X}^{\mathrm{coarse}}_{t+1}
    \leftarrow \operatorname{Downsample}(\mathbf{X}_{t+1})$
    \hfill $\triangleright$ coarse future view

    \STATE $\mathbf{h}_t\leftarrow E_\theta(\mathbf{X}_t)$
    \STATE $\mathbf{p}_t,\mathbf{z}_t\leftarrow Q_\theta(\mathbf{h}_t)$
    \hfill $\triangleright$ online soft codes

    \STATE $\hat{\mathbf{X}}_t\leftarrow D_\theta(\mathbf{z}_t)$
    \hfill $\triangleright$ reconstruction branch

    \STATE $\hat{\mathbf{p}}^{\mathrm{fine}}_{t+1},
    \hat{\mathbf{z}}_{t+1}
    \leftarrow P_\theta^{\mathrm{fine}}(\mathbf{p}_t)$
    \hfill $\triangleright$ fine prediction

    \STATE $\hat{\mathbf{p}}^{\mathrm{coarse}}_{t+1}
    \leftarrow P_\theta^{\mathrm{coarse}}(\mathbf{p}_t)$
    \hfill $\triangleright$ coarse prediction

    \STATE $\mathbf{h}^{\mathrm{fine}}_{t+1}
    \leftarrow E_\xi(\mathbf{X}_{t+1})$
    \STATE $\mathbf{p}^{\mathrm{fine}}_{t+1},
    \mathbf{z}_{t+1}
    \leftarrow Q_\xi(\mathbf{h}^{\mathrm{fine}}_{t+1})$
    \hfill $\triangleright$ stop-gradient fine target

    \STATE $\mathbf{h}^{\mathrm{coarse}}_{t+1}
    \leftarrow E_\xi(\mathbf{X}^{\mathrm{coarse}}_{t+1})$
    \STATE $\mathbf{p}^{\mathrm{coarse}}_{t+1}
    \leftarrow Q_\xi(\mathbf{h}^{\mathrm{coarse}}_{t+1})$
    \hfill $\triangleright$ stop-gradient coarse target

    \STATE Compute $\mathcal{L}_{\mathrm{pred}}$,
    $\mathcal{L}_{\mathrm{code}}$, and $\mathcal{L}_{\mathrm{rec}}$

    \STATE $\mathcal{L}_{\mathrm{total}}
    \leftarrow
    \mathcal{L}_{\mathrm{pred}}
    +\mathcal{L}_{\mathrm{code}}
    +\lambda_r\mathcal{L}_{\mathrm{rec}}$

    \STATE Update $\theta$ using $\mathcal{L}_{\mathrm{total}}$
    \hfill $\triangleright$ gradient update

    \STATE $\xi\leftarrow\rho\xi+(1-\rho)\theta$
    \hfill $\triangleright$ EMA target update
\ENDWHILE
\end{algorithmic}
\end{algorithm}

\subsection{Downstream Anomaly Prediction Protocol}
\label{app:downstream_protocol}
\textbf{Problem Formulation.}
We formulate the downstream task as Anomaly Prediction, distinguishing it from standard point-wise anomaly detection.
The objective is to utilize a historical context window $\mathbf{X}_t$ to predict the anomaly status of the subsequent non-overlapping future window $\mathbf{X}_{t+1}$.
This setup evaluates the model's ability to identify early warning signs (precursors) rather than merely reacting to ongoing failures.
Each dataset provides anomaly annotations as a binary indicator array aligned with the time axis,
denoted by $\mathbf{a}\in\{0,1\}^{T_{\mathrm{full}}}$.
Let $\mathcal{I}_{t+1}$ be the set of time indices covered by the future window $\mathbf{X}_{t+1}$.
We define the window-level label as
\begin{equation}
y_{t+1} \;=\; \mathbb{I}\!\left(\sum_{\tau \in \mathcal{I}_{t+1}} \mathbf{a}_{\tau} > 0\right) \in \{0,1\}.
\end{equation}
Let $\hat{s}_{t+1}\in[0,1]$ denote the predicted anomaly probability and let
\begin{equation}
\hat{y}_{t+1} \;=\; \mathbb{I}\!\left(\hat{s}_{t+1}>\delta^*\right)
\end{equation}
be the binary prediction using threshold $\delta^*$ selected on the validation set to maximize the window-level F1 score.

\textbf{Notation.}
We denote by $\mathbf{X}_t$ the context window and by $\mathbf{X}_{t+1}$ the subsequent non-overlapping future window.
The supervision signal is the \emph{future-window} label $y_{t+1}\in\{0,1\}$, which indicates whether $\mathbf{X}_{t+1}$ is anomalous under the window-level definition above.
Thus, each downstream training example is a pair $(\mathbf{X}_t, y_{t+1})$, and the goal is anomaly prediction.

We report window-level Precision, Recall, and F1-score:
\begin{equation}
\mathrm{Precision}=\frac{\mathrm{TP}}{\mathrm{TP}+\mathrm{FP}},\quad
\mathrm{Recall}=\frac{\mathrm{TP}}{\mathrm{TP}+\mathrm{FN}},\quad
\mathrm{F1}=\frac{2 \cdot \mathrm{Precision} \cdot \mathrm{Recall}}{\mathrm{Precision}+\mathrm{Recall}}.
\end{equation}
We additionally report the Area Under the Receiver Operating Characteristic curve (AUC) computed from the raw probabilities $\hat{s}_{t+1}$.

\textbf{Adaptation Strategy.}
We utilize the pre-trained SC-JEPA as a frozen feature extractor.
Specifically, we discard the projection heads and freeze the parameters of the encoder and codebook.
For a given input $\mathbf{X}_t$, the encoder yields soft code probabilities
$\mathbf{p}_t \in \mathbb{R}^{V \times P \times K}$, where $V$ denotes variables, $P$ patches, and $K$ the codebook size.
To capture the most salient anomalies across the multivariate dimensions while preserving temporal structure, we apply \textit{Variable-wise Max-Pooling}:
\begin{equation}
\mathbf{h}_t = \max_{v \in \{1 \dots V\}} \mathbf{p}_t^{(v)} \in \mathbb{R}^{P \times K}.
\end{equation}
The aggregated representation $\mathbf{h}_t$ is flattened and passed to an MLP classifier $C_\psi$, trained to estimate $\hat{s}_{t+1}=P(y_{t+1}=1 \mid \mathbf{h}_t)$.

\begin{algorithm}[H]
\caption{Downstream Anomaly Prediction Protocol}
\label{alg:downstream}
\begin{algorithmic}[1]
\REQUIRE Frozen encoder $E_\theta$, frozen codebook $Q$, and splits
$\mathcal{S}_{\mathrm{tr}}$, $\mathcal{S}_{\mathrm{val}}$, $\mathcal{S}_{\mathrm{te}}$
\STATE Initialize MLP classifier $C_\psi$

\FOR{each batch $(\mathbf{X}_t,y_{t+1})\in\mathcal{S}_{\mathrm{tr}}$}
    \STATE $\mathbf{p}_t \leftarrow Q(E_\theta(\mathbf{X}_t))$
    \hfill $\triangleright\ \mathbf{p}_t\in\mathbb{R}^{V\times P\times K}$
    \STATE $\mathbf{h}_t \leftarrow \max_{v}\mathbf{p}_t^{(v)}$
    \hfill $\triangleright\ \mathbf{h}_t\in\mathbb{R}^{P\times K}$
    \STATE $\hat{s}_{t+1}\leftarrow C_\psi(\mathbf{h}_t)$
    \STATE Update $\psi$ using $\operatorname{BCE}(\hat{s}_{t+1},y_{t+1})$
\ENDFOR

\STATE Select $\delta^*$ on $\mathcal{S}_{\mathrm{val}}$ to maximize window-level F1
\STATE Evaluate Precision, Recall, and F1 on $\mathcal{S}_{\mathrm{te}}$ using $\delta^*$, and AUC using $\hat{s}_{t+1}$
\end{algorithmic}
\end{algorithm}

% =========================================================
% Section C: Extended Experimental Results
% =========================================================
\section{Experimental Results}
\label{app:extended_results}

\subsection{Main Benchmark}
\label{app:main_table}
Table~\ref{tab:experiment} presents the comprehensive performance evaluation of our proposed method against baseline models across standard anomaly detection benchmarks: MSL, SMAP, SWaT, SMD and PSM. In addition to the F1-score and AUC reported in the main text, we provide detailed comparisons on Precision and Recall to offer a holistic view of the detection capabilities. All metrics are reported as the mean $\pm$ standard deviation over five independent runs. The results demonstrate that our method consistently achieves superior accuracy and robustness compared to both classical unsupervised baselines (e.g., K-Means, DeepSVDD) and recent state-of-the-art multivariate time-series models (e.g., PatchTST, TS2Vec).

\begin{table}[H]
\centering
\caption{Full results on five benchmark datasets. All metrics are reported in percentage (\%) as mean $\pm$ standard deviation over 5 runs with different random seeds. Best baselines are underlined.}
\label{tab:experiment}
\captionsetup{font=small}
\setlength{\tabcolsep}{2pt}
\renewcommand{\arraystretch}{1.2}
\resizebox{\textwidth}{!}{%
\begin{tabular}{l|cccc|cccc|cccc|cccc|cccc}
\toprule
\textbf{Models}
& \multicolumn{4}{c|}{\textbf{MSL}}
& \multicolumn{4}{c|}{\textbf{SMAP}}
& \multicolumn{4}{c|}{\textbf{SWaT}}
& \multicolumn{4}{c|}{\textbf{PSM}}
& \multicolumn{4}{c}{\textbf{SMD}} \\
\cline{2-21}
& F1 & AUC & Prec & Rec
& F1 & AUC & Prec & Rec
& F1 & AUC & Prec & Rec
& F1 & AUC & Prec & Rec
& F1 & AUC & Prec & Rec \\
\hline
K-Means
& 20.63 $\pm$ 2.30 & 52.17 $\pm$ 5.65 & 19.27 $\pm$ 7.16 & 31.34 $\pm$ 15.40
& 8.74 $\pm$ 3.39 & 39.82 $\pm$ 2.57 & 10.52 $\pm$ 8.17 & 10.16 $\pm$ 4.92
& 14.45 $\pm$ 0.38 & 62.20 $\pm$ 0.16 & 7.99 $\pm$ 0.32 & \underline{77.05} $\pm$ 8.16
& 42.75 $\pm$ 1.73 & 51.37 $\pm$ 3.80 & 30.40 $\pm$ 2.34 & 78.89 $\pm$ 21.18
& 14.79 $\pm$ 5.41 & 57.49 $\pm$ 0.22 & 35.83 $\pm$ 34.03 & 20.64 $\pm$ 11.95 \\

DeepSVDD
& 23.46 $\pm$ 0.98 & 53.67 $\pm$ 4.80 & 14.78 $\pm$ 2.14 & 73.87 $\pm$ 26.12
& 22.05 $\pm$ 2.97 & 44.07 $\pm$ 2.61 & 12.90 $\pm$ 1.19 & \underline{81.99} $\pm$ 24.23
& 15.28 $\pm$ 1.75 & 58.22 $\pm$ 2.34 & 12.65 $\pm$ 1.90 & 20.00 $\pm$ 3.34
& 44.45 $\pm$ 1.56 & 49.95 $\pm$ 3.10 & 30.31 $\pm$ 3.33 & 89.38 $\pm$ 16.00
& 9.41 $\pm$ 2.42 & 47.31 $\pm$ 4.44 & 6.25 $\pm$ 1.61 & 19.15 $\pm$ 4.85 \\

LSTM-VAE
& 22.42 $\pm$ 7.02 & 55.22 $\pm$ 10.27 & 20.84 $\pm$ 10.10 & 32.43 $\pm$ 15.96
& 22.77 $\pm$ 1.73 & 52.97 $\pm$ 2.60 & 14.35 $\pm$ 1.21 & 55.39 $\pm$ 4.47
& 54.60 $\pm$ 1.17 & 79.91 $\pm$ 1.00 & 64.92 $\pm$ 14.30 & 50.07 $\pm$ 8.30
& 56.73 $\pm$ 1.14 & 71.95 $\pm$ 0.50 & 49.32 $\pm$ 1.59 & 66.86 $\pm$ 2.10
& 10.04 $\pm$ 1.10 & 56.18 $\pm$ 0.86 & 5.78 $\pm$ 0.58 & 38.29 $\pm$ 6.92 \\

iTransformer
& 27.25 $\pm$ 2.03 & 64.61 $\pm$ 1.19 & 16.71 $\pm$ 1.25 & 75.40 $\pm$ 11.80
& \underline{33.00} $\pm$ 2.05 & 60.91 $\pm$ 1.58 & 22.67 $\pm$ 2.83 & 64.32 $\pm$ 10.77
& 70.49 $\pm$ 1.00 & 82.10 $\pm$ 0.36 & 98.17 $\pm$ 1.82 & 55.00 $\pm$ 1.18
& 54.12 $\pm$ 0.92 & 63.09 $\pm$ 0.75 & 37.37 $\pm$ 1.17 & 98.29 $\pm$ 2.10
& 14.20 $\pm$ 0.75 & 57.94 $\pm$ 0.85 & 8.11 $\pm$ 0.76 & \underline{62.34} $\pm$ 13.63 \\

TimesNet
& \underline{28.44} $\pm$ 5.79 & 59.70 $\pm$ 5.15 & 30.31 $\pm$ 7.40 & 36.59 $\pm$ 27.00
& 27.78 $\pm$ 1.48 & 57.01 $\pm$ 1.81 & 17.98 $\pm$ 0.60 & 62.81 $\pm$ 10.10
& 67.20 $\pm$ 3.94 & 83.24 $\pm$ 1.64 & 90.87 $\pm$ 15.87 & 54.42 $\pm$ 6.09
& 52.64 $\pm$ 0.02 & 56.12 $\pm$ 2.66 & 35.76 $\pm$ 0.09 & \underline{99.71} $\pm$ 0.57
& 12.51 $\pm$ 0.49 & 56.33 $\pm$ 0.87 & 7.64 $\pm$ 0.42 & 35.96 $\pm$ 6.64 \\
Qwen2.5-3B
& 21.24 $\pm$ 0.52 & 55.66 $\pm$ 0.84 & 17.82 $\pm$ 1.28 & 26.67 $\pm$ 2.22
& 0.34 $\pm$ 0.69 & 63.61 $\pm$ 1.05 & 1.67 $\pm$ 3.33 & 0.19 $\pm$ 0.38
& 17.29 $\pm$ 1.85 & 56.75 $\pm$ 7.35 & 41.82 $\pm$ 21.26 & 14.43 $\pm$ 6.83
& 47.44 $\pm$ 9.10 & 65.65 $\pm$ 5.93 & 50.44 $\pm$ 6.09 & 46.57 $\pm$ 14.36
& 13.00 $\pm$ 1.78 & 60.66 $\pm$ 4.59 & 8.29 $\pm$ 1.41 & 31.06 $\pm$ 3.65 \\
PAD
& 21.67 $\pm$ 0.03 & 55.97 $\pm$ 6.96 & 12.19 $\pm$ 0.06 & \underline{97.78} $\pm$ 4.44
& 26.92 $\pm$ 1.78 & 59.79 $\pm$ 0.30 & 18.15 $\pm$ 1.21 & 55.00 $\pm$ 11.80
& 69.75 $\pm$ 4.83 & 83.71 $\pm$ 0.61 & 81.54 $\pm$ 15.17 & 62.79 $\pm$ 4.32
& 57.82 $\pm$ 5.96 & 73.84 $\pm$ 1.85 & 57.54 $\pm$ 3.18 & 58.57 $\pm$ 9.17
& 12.52 $\pm$ 0.34 & 58.53 $\pm$ 2.35 & 6.71 $\pm$ 0.20 & 93.40 $\pm$ 2.46 \\

A2P
& 23.10 $\pm$ 0.85 & 59.43 $\pm$ 1.28 & 13.92 $\pm$ 1.18 & 74.12 $\pm$ 17.29
& 21.68 $\pm$ 4.10 & 60.37 $\pm$ 1.42 & 15.88 $\pm$ 1.26 & 38.85 $\pm$ 18.15
& 70.18 $\pm$ 0.26 & 81.92 $\pm$ 0.10 & \underline{99.10} $\pm$ 0.74 & 54.33 $\pm$ 0.33
& 50.71 $\pm$ 0.96 & 57.01 $\pm$ 0.38 & 37.16 $\pm$ 0.97 & 80.00 $\pm$ 3.48
& 15.21 $\pm$ 0.73 & 58.87 $\pm$ 1.55 & 12.23 $\pm$ 1.48 & 22.13 $\pm$ 6.33 \\
FCM 
& 24.57 $\pm$ 0.73 & 61.09 $\pm$ 0.76 & 14.25 $\pm$ 0.54 & 90.00 $\pm$ 7.37
& 30.60 $\pm$ 3.08 & 61.85 $\pm$ 0.86 & 25.25 $\pm$ 2.76 & 41.41 $\pm$ 9.44
& 69.09 $\pm$ 1.08 & 81.66 $\pm$ 0.37 & 94.25 $\pm$ 7.92 & 54.83 $\pm$ 1.86
& 52.34 $\pm$ 0.89 & 57.25 $\pm$ 1.18 & 37.00 $\pm$ 1.39 & 89.85 $\pm$ 4.37
&\underline{ 19.25} $\pm$ 1.46 & \underline{60.44} $\pm$ 0.65 & 15.59 $\pm$ 2.43 & 28.94 $\pm$ 10.43 \\
MICN
& 20.00 $\pm$ 0.96 & 60.18 $\pm$ 2.26 & 11.56 $\pm$ 0.57 & 75.29 $\pm$ 10.12
& 20.81 $\pm$ 2.38 & 51.16 $\pm$ 1.72 & 13.75 $\pm$ 3.01 & 55.58 $\pm$ 22.01
& 48.31 $\pm$ 3.19 & 79.38 $\pm$ 0.30 & 77.99 $\pm$ 7.96 & 35.33 $\pm$ 3.71
& 48.61 $\pm$ 1.38 & 54.97 $\pm$ 1.12 & 34.64 $\pm$ 1.33 & 82.09 $\pm$ 6.61
& 12.05 $\pm$ 0.70 & 52.21 $\pm$ 0.37 & 7.72 $\pm$ 0.53 & 28.72 $\pm$ 5.83 \\
Pathformer
& 21.69 $\pm$ 2.29 & 61.86 $\pm$ 0.66 & 13.14 $\pm$ 1.43 & 68.24 $\pm$ 18.45
& 23.58 $\pm$ 1.27 & 58.67 $\pm$ 0.38 & 14.10 $\pm$ 0.68 & 72.12 $\pm$ 7.37
& 14.90 $\pm$ 1.72 & 59.87 $\pm$ 1.24 & 9.20 $\pm$ 1.04 & 40.98 $\pm$ 9.16
& 51.77 $\pm$ 0.84 & 54.41 $\pm$ 3.09 & 35.04 $\pm$ 0.76 & 99.10 $\pm$ 1.19
& 14.32 $\pm$ 0.50 & 56.49 $\pm$ 1.71 & 9.41 $\pm$ 1.24 & 36.60 $\pm$ 15.82 \\
TS-JEPA
& 25.49 $\pm$ 4.22 & 60.33 $\pm$ 4.65 & 17.10 $\pm$ 3.91 & 56.84 $\pm$ 18.65
& 26.57 $\pm$ 1.67 & 57.38 $\pm$ 0.72 & 18.23 $\pm$ 0.74 & 49.73 $\pm$ 8.05
& \underline{71.95} $\pm$ 0.56 & 80.33 $\pm$ 0.69 & 84.32 $\pm$ 1.67 & 62.76 $\pm$ 0.33
& 53.32 $\pm$ 2.25 & 66.10 $\pm$ 3.11 & 36.95 $\pm$ 2.50 & 96.29 $\pm$ 2.65
& 7.49 $\pm$ 2.65 & 49.53 $\pm$ 8.44 & 19.11 $\pm$ 27.96 & 20.43 $\pm$ 14.37 \\

PatchTST
& 26.98 $\pm$ 0.51 & 60.43 $\pm$ 1.13 & \underline{75.00} $\pm$ 0.00 & 16.45 $\pm$ 0.38
& 30.06 $\pm$ 1.20 & \underline{61.62} $\pm$ 0.66 & 21.76 $\pm$ 0.93 & 49.06 $\pm$ 2.10
& 70.64 $\pm$ 3.57 & 81.93 $\pm$ 1.27 & 94.03 $\pm$ 4.25 & 57.58 $\pm$ 3.94
& \underline{58.17} $\pm$ 0.73 & \underline{75.76} $\pm$ 0.11 & \underline{64.68} $\pm$ 0.59 & 52.86 $\pm$ 0.90
& 11.79 $\pm$ 0.43 & 47.76 $\pm$ 0.95 & 6.58 $\pm$ 0.22 & 57.02 $\pm$ 3.40 \\

TS2Vec
& 23.48 $\pm$ 2.14 & \underline{64.86} $\pm$ 0.54 & 31.45 $\pm$ 2.46 & 18.82 $\pm$ 2.35
& 32.81 $\pm$ 0.32 & 61.48 $\pm$ 0.46 & \underline{23.81} $\pm$ 0.30 & 52.81 $\pm$ 1.95
& 67.00 $\pm$ 0.83 & \underline{83.76} $\pm$ 0.23 & 80.95 $\pm$ 2.10 & 57.17 $\pm$ 0.41
& 48.43 $\pm$ 1.01 & 72.13 $\pm$ 0.43 & 32.39 $\pm$ 0.58 & 95.96 $\pm$ 2.90
& 14.78 $\pm$ 1.46 & 56.28 $\pm$ 0.31 & 8.83 $\pm$ 1.28 & 50.43 $\pm$ 11.51 \\

\hline
\textbf{SC-JEPA}
& \textbf{33.58 $\pm$ 4.34} & \textbf{66.08 $\pm$ 3.25} & \textbf{35.87 $\pm$ 10.90} & \textbf{40.80 $\pm$ 15.68}
& \textbf{33.64 $\pm$ 1.45} & \textbf{65.41 $\pm$ 2.06} & \textbf{24.24 $\pm$ 1.28} & \textbf{56.02 $\pm$ 8.18}
& \textbf{72.89 $\pm$ 0.70} & \textbf{84.95 $\pm$ 0.82} & \textbf{98.00 $\pm$ 1.00} & \textbf{58.05 $\pm$ 1.20}
& \textbf{61.61 $\pm$ 4.32} & \textbf{77.85 $\pm$ 1.28} & \textbf{55.01 $\pm$ 2.47} & \textbf{72.00 $\pm$ 12.96}
& \textbf{20.82 $\pm$ 2.61} & \textbf{62.29 $\pm$ 0.90} & \textbf{18.18 $\pm$ 5.24} & 31.49 $\pm$ 13.48 \\
\bottomrule
\end{tabular}%
}
\end{table}

\subsection{Generality}
\label{app:generality}
We evaluate generality by comparing in-domain and cross-domain pre-training while keeping the downstream evaluation exactly the same.

For a target dataset, in-domain pre-training uses its official training set. Cross-domain pre-training uses the union of the other three datasets, with the target dataset completely excluded during pre-training. Since the source datasets may have different channel dimensions in this cross-domain setting, we treat each channel as an independent univariate series during pre-training, which avoids any explicit channel matching across datasets. After pre-training, we follow the same downstream protocol and evaluate on the same target test split in both settings, so the only difference is the source of pre-training data.
\begin{table}[H]
\centering
\caption{Comparison of in-domain versus cross-domain generalization.}
\label{tab:generalization}

\small
\setlength{\tabcolsep}{3.5pt}
\renewcommand{\arraystretch}{1.1}
\resizebox{\linewidth}{!}{
\begin{tabular}{c|l|cc|cc|cc|cc}
\toprule
\multirow{2}{*}{\textbf{Models}} & \multirow{2}{*}{\textbf{Settings}} 
& \multicolumn{2}{c|}{\textbf{MSL}} & \multicolumn{2}{c|}{\textbf{SMAP}} & \multicolumn{2}{c|}{\textbf{SWaT}} & \multicolumn{2}{c}{\textbf{PSM}} \\
\cline{3-10} 
& & \multicolumn{1}{c}{F1} & \multicolumn{1}{c|}{AUC}
& \multicolumn{1}{c}{F1} & \multicolumn{1}{c|}{AUC}
& \multicolumn{1}{c}{F1} & \multicolumn{1}{c|}{AUC}
& \multicolumn{1}{c}{F1} & \multicolumn{1}{c}{AUC} \\
\hline
\multirow{4}{*}{\textbf{PatchTST}} 
& In-Domain & 26.98$\pm$0.51 & 60.43$\pm$1.13 & 30.06$\pm$1.20 & 61.62$\pm$0.66 & 70.64$\pm$3.57 & 81.93$\pm$1.27 & 58.17$\pm$0.73 & 75.76$\pm$0.11 \\
& Cross-Domain & 27.08$\pm$0.44 & 60.80$\pm$1.17 & 26.21$\pm$0.24 & 58.77$\pm$1.63 & 54.34$\pm$5.96 & 68.76$\pm$0.87 & 50.07$\pm$0.19 & 53.72$\pm$0.93 \\
\arrayrulecolor{gray!80}\cline{2-10}\arrayrulecolor{black}
& Abs. Diff.   & +0.10 & +0.37 & -3.85 & -2.85 & -16.30 & -13.17 & -8.10 & -22.04 \\
& Rel. Diff.   & +0.37\% & +0.61\% & -12.81\% & -4.63\% & -23.07\% & -16.07\% & -13.92\% & -29.09\% \\

\hline
\multirow{4}{*}{\textbf{TS2Vec}} 
& In-Domain & 23.48$\pm$2.14 & 64.86$\pm$0.54 & 32.81$\pm$0.32 & 61.48$\pm$0.46 & 67.00$\pm$0.83 & 83.76$\pm$0.23 & 48.43$\pm$1.01 & 72.13$\pm$0.43 \\
& Cross-Domain & 22.87$\pm$2.67 & 62.00$\pm$0.74 & 26.99$\pm$0.75 & 57.54$\pm$0.35 & 65.08$\pm$0.53 & 82.91$\pm$0.02 & 48.15$\pm$7.73 & 56.22$\pm$3.49 \\
\arrayrulecolor{gray!80}\cline{2-10}\arrayrulecolor{black}
& Abs. Diff. & -0.61 & -2.86 & -5.82 & -3.94 & -1.92 & -0.85 & -0.28 & -15.91 \\
& Rel. Diff. & -2.60\% & -4.41\% & -17.74\% & -6.41\% & -2.87\% & -1.01\% & -0.58\% & -22.06\% \\
\hline
\multirow{4}{*}{\textbf{SC-JEPA}} 
& In-Domain & 33.58$\pm$4.34 & 66.08$\pm$3.25 & 33.64$\pm$1.45 & 65.41$\pm$2.06 & 72.89$\pm$0.70 & 84.95$\pm$0.82 & 61.61$\pm$4.32 & 77.85$\pm$1.28 \\
& Cross-Domain & 33.48$\pm$7.26 & 67.13$\pm$6.15 & 33.81$\pm$5.13 & 68.88$\pm$1.45 & 71.10$\pm$0.76 & 82.96$\pm$0.40 & 56.87$\pm$4.31 & 66.80$\pm$6.02  \\
\arrayrulecolor{gray!80}\cline{2-10}\arrayrulecolor{black}
& Abs. Diff. & -0.10 & +1.05 & +0.17 & +3.47 & -1.79 & -1.99 & -4.74 & -11.05 \\
& Rel. Diff. & -0.30\% & \textbf{+1.59\%} & +0.51\% & \textbf{+5.30\%} & -2.46\% & \textbf{-2.34\%} & -7.69\% & \textbf{-14.19\%} \\
\bottomrule
\end{tabular}
}

\end{table}

\subsection{Ablation Studies}
\label{app:ablation}

Table~\ref{tab:ablation} reports the complete ablation results. All variants follow the same training and evaluation protocol as the full model, and we modify only the targeted component while keeping the rest of the architecture and training pipeline unchanged. For the codebook operation studies, the full model uses a discrete codebook bottleneck that represents each latent via soft assignment to a finite set of prototypes, with a slowly updated EMA target branch providing a stable reference for prototype states and assignment targets. To separate the effect of the bottleneck structure from the effect of auxiliary constraints, we evaluate two complementary interventions. In the loss-only removal setting, we keep the soft codebook bottleneck in both the online and EMA target branches, and we set the auxiliary codebook regularizer weights to zero. In the module removal setting, we remove the codebook bottleneck from both branches. The model then operates purely in continuous latent space, and the predictive objective reduces to an MSE loss because the code-assignment KL term is no longer defined without discrete code distributions.
\paragraph{Note.}
When the standard deviation across random seeds is (near) zero while the mean score is low, this typically indicates model collapse, i.e., the model makes nearly identical (often single-class) predictions across runs and loses discriminative signal.
\begin{table}[H]
\centering
\caption{Ablation results on five benchmark datasets.}
\label{tab:ablation}
\setlength{\tabcolsep}{3pt}
\renewcommand{\arraystretch}{1.15}
\captionsetup{font=small}
\resizebox{\textwidth}{!}{
\begin{tabular}{l|cc|cc|cc|cc|cc}
\toprule
\textbf{Datasets} 
& \multicolumn{2}{c|}{\textbf{MSL}} 
& \multicolumn{2}{c|}{\textbf{SMAP}} 
& \multicolumn{2}{c|}{\textbf{SWaT}} 
& \multicolumn{2}{c|}{\textbf{PSM}}
& \multicolumn{2}{c}{\textbf{SMD}} \\
\hline
\textbf{Metrics} 
& F1 & AUC & F1 & AUC & F1 & AUC & F1 & AUC & F1 & AUC \\
\hline

w/o KL Regularization      
& 29.01 $\pm$ 6.64 & 53.90 $\pm$ 5.25
& 28.71 $\pm$ 1.71 & 60.19 $\pm$ 0.95
& 67.33 $\pm$ 3.97 & 79.95 $\pm$ 0.97
& 54.93 $\pm$ 2.54 & 70.83 $\pm$ 1.25
& 12.63 $\pm$ 3.75 & 53.61 $\pm$ 8.33 \\

w/o Reconstruction Decoder 
& 25.96 $\pm$ 1.82 & 52.68 $\pm$ 1.68
& 25.15 $\pm$ 0.31 & 52.80 $\pm$ 1.88
& 14.77 $\pm$ 1.42 & 53.30 $\pm$ 2.45
& 53.03 $\pm$ 0.00 & 69.84 $\pm$ 7.74
& 10.18 $\pm$ 1.65 & 50.44 $\pm$ 1.44 \\

w/o Predictive Objective
& 28.88 $\pm$ 6.41 & 53.17 $\pm$ 1.03
& 27.05 $\pm$ 2.31 & 59.81 $\pm$ 1.76
& 59.47 $\pm$ 1.55 & 78.60 $\pm$ 0.59
& 58.89 $\pm$ 5.50 & 70.17 $\pm$ 4.73
& 10.04 $\pm$ 3.42 & 50.24 $\pm$ 5.07 \\

w/o Codebook Loss
& 31.62 $\pm$ 1.71 & 58.93 $\pm$ 1.81
& 31.53 $\pm$ 7.88 & 62.06 $\pm$ 0.60
& 72.64 $\pm$ 1.24 & 82.63 $\pm$ 0.63
& 60.83 $\pm$ 4.09 & 75.85 $\pm$ 0.65
& 12.82 $\pm$ 2.01 & 57.36 $\pm$ 4.25 \\

w/o Codebook Module  
& 21.82 $\pm$ 0.00 & 43.02 $\pm$ 9.30
& 21.69 $\pm$ 0.00 & 51.00 $\pm$ 1.23 
& 11.51 $\pm$ 0.00 & 50.00 $\pm$ 0.00 
& 53.03 $\pm$ 0.00 & 46.61 $\pm$ 6.79
& 12.45 $\pm$ 0.00 & 50.00 $\pm$ 0.00 \\

w/o Downsampling        
& 28.77 $\pm$ 6.95 & 63.16 $\pm$ 3.83
& 30.39 $\pm$ 2.47 & 61.34 $\pm$ 0.15
& 71.62 $\pm$ 0.28 & 83.60 $\pm$ 0.40
& 60.79 $\pm$ 3.31 & 71.72 $\pm$ 4.68
& 15.50 $\pm$ 1.54 & 60.93 $\pm$ 0.10 \\

\textbf{Full Model} 
& \textbf{33.58 $\pm$ 4.34} & \textbf{66.08 $\pm$ 3.25}
& \textbf{33.64 $\pm$ 1.45} & \textbf{65.41 $\pm$ 2.06}
& \textbf{72.89 $\pm$ 0.70} & \textbf{84.95 $\pm$ 0.82}
& \textbf{61.61 $\pm$ 4.32} & \textbf{77.85 $\pm$ 1.28}
& \textbf{20.82 $\pm$ 2.61} & \textbf{62.29 $\pm$ 0.90} \\

\bottomrule
\end{tabular}
}
\end{table}

% =========================================================
% Section D: Visualizations and Qualitative Analysis
% =========================================================
\section{Visualizations and Qualitative Analysis}
\label{app:viz}

\subsection{Framework}
\label{app:framework}

Figure~\ref{fig:framework} illustrates the end-to-end data flow and objectives of our framework, which consists of five coordinated components:
\begin{figure}[H]
    \centering
    \includegraphics[width=\linewidth]{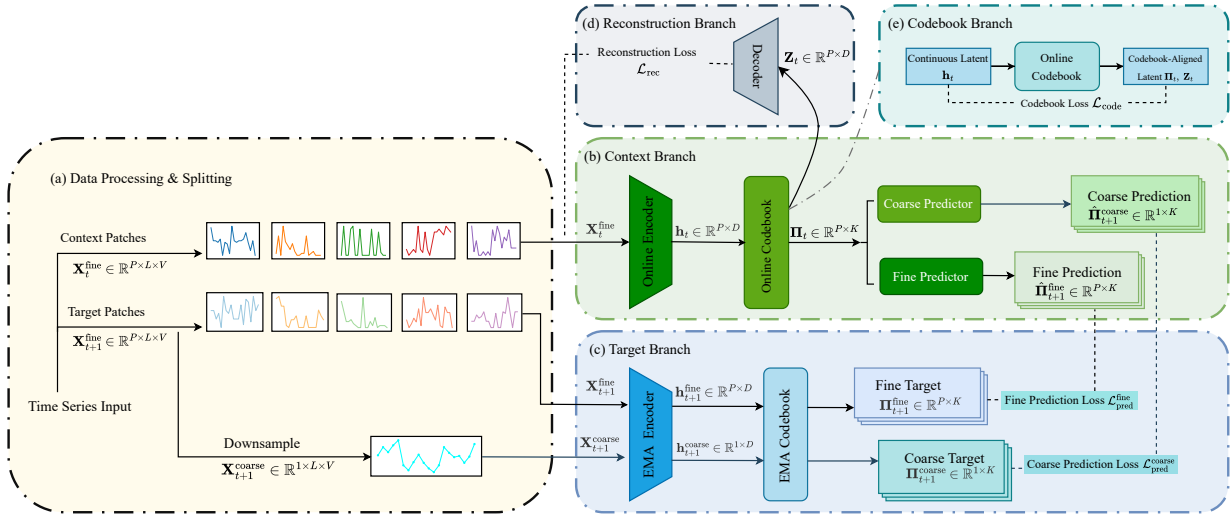}
    \caption{Overview of the Proposed Framework.}
    \label{fig:framework}
\end{figure}

\begin{enumerate}[label=(\alph*), leftmargin=*]
    \item \textbf{Data Processing \& Splitting:} We construct consecutive context--target pairs from non-overlapping windows. The context window is tokenized into fine-grained patches $\mathbf{X}^{\text{fine}}_t \in \mathbb{R}^{P \times L \times V}$. Simultaneously, the future window is processed into two views: fine-grained patches $\mathbf{X}^{\text{fine}}_{t+1}$ and a downsampled coarse view $\mathbf{X}^{\text{coarse}}_{t+1}$, obtained by averaging every $P$ consecutive time points to capture low-frequency trends.

    \item \textbf{Context Branch:} The online encoder processes the context patches $\mathbf{X}^{\text{fine}}_t$ to produce continuous latents $\mathbf{h}_t$. These are passed through the Online Codebook to generate the discrete code distributions $\boldsymbol{\Pi}_t \in \mathbb{R}^{P \times K}$. This sequence $\boldsymbol{\Pi}_t$ is then shared by two predictors: the \textit{Fine Predictor} outputs patch-level predictions $\hat{\boldsymbol{\Pi}}^{\text{fine}}_{t+1}$, while the \textit{Coarse Predictor} aggregates the history to produce a global prediction $\hat{\boldsymbol{\Pi}}^{\text{coarse}}_{t+1}$.

    \item \textbf{Target Branch:} To provide stable supervision, the momentum-updated EMA encoder processes both future views ($\mathbf{X}^{\text{fine}}_{t+1}$ and $\mathbf{X}^{\text{coarse}}_{t+1}$). The resulting features are mapped by the EMA Codebook into target distributions $\boldsymbol{\Pi}^{\text{fine}}_{t+1}$ and $\boldsymbol{\Pi}^{\text{coarse}}_{t+1}$. The model minimizes the prediction losses $\mathcal{L}^{\text{fine}}_{\text{pred}}$ and $\mathcal{L}^{\text{coarse}}_{\text{pred}}$ by matching the online predictions to these targets.

    \item \textbf{Reconstruction Branch:} To prevent representation collapse and anchor the semantics, a decoder reconstructs the original signal from the codebook-aligned embeddings $\mathbf{Z}_t$, optimizing the reconstruction loss $\mathcal{L}_{\text{rec}}$.

    \item \textbf{Codebook Branch:} This module aligns the continuous latents $\mathbf{h}_t$ with the discrete code assignments $\boldsymbol{\Pi}_t$ and embeddings $\mathbf{Z}_t$. It computes the codebook loss $\mathcal{L}_{\text{code}}$, which regularizes the assignment probabilities to ensure the codebook remains utilized and structured.
\end{enumerate}

\subsection{Inference Efficiency}
\label{app:inference}
\begin{figure}[H]
    \centering
    % Image 1
    \begin{subfigure}{0.32\linewidth}
        \centering
        \includegraphics[width=\linewidth]{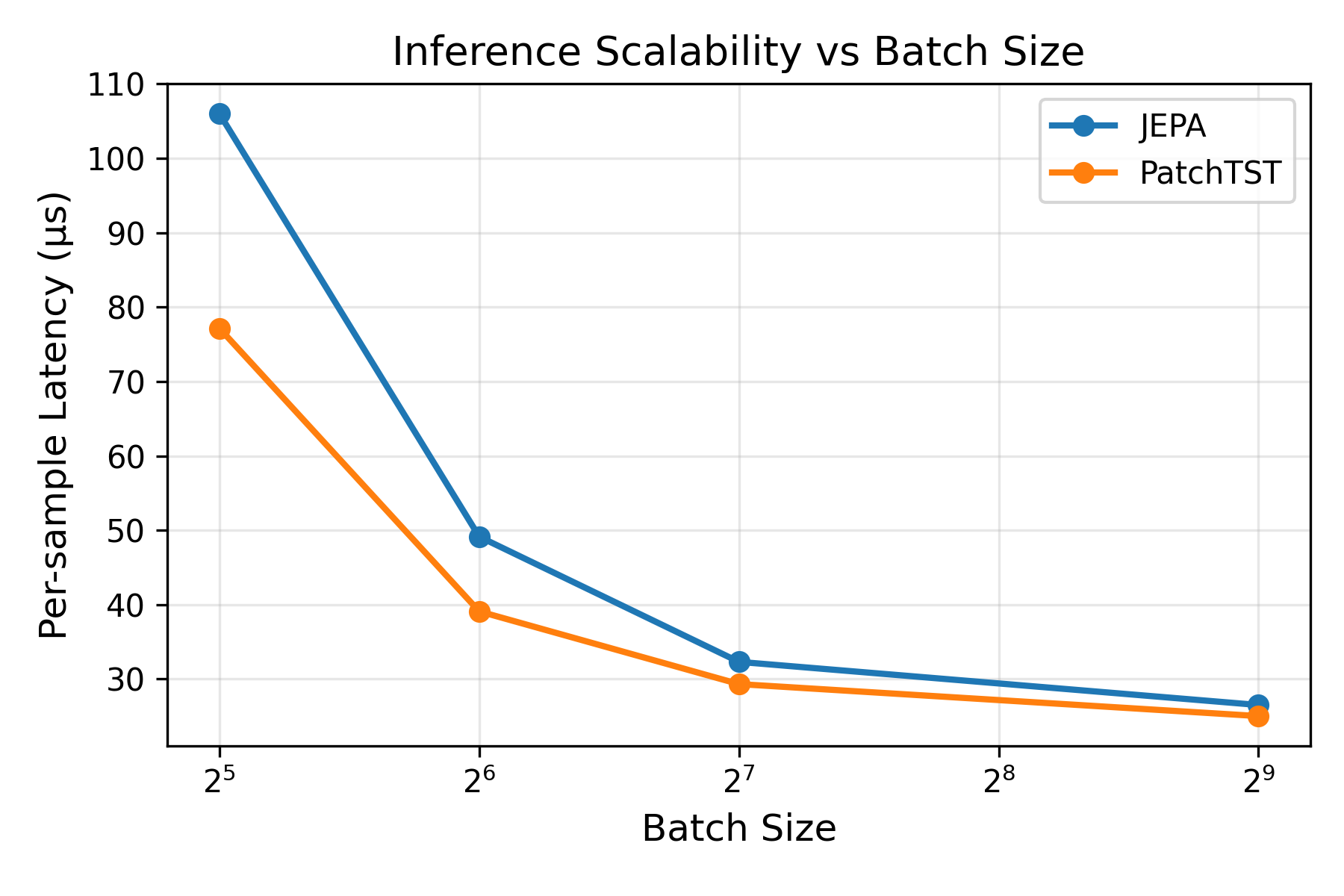}
        \caption{Latency vs. Batch}
        \label{fig:batch_scaling}
    \end{subfigure}
    \hfill % Pushes the next image to the right
    % Image 2
    \begin{subfigure}{0.32\linewidth}
        \centering
        \includegraphics[width=\linewidth]{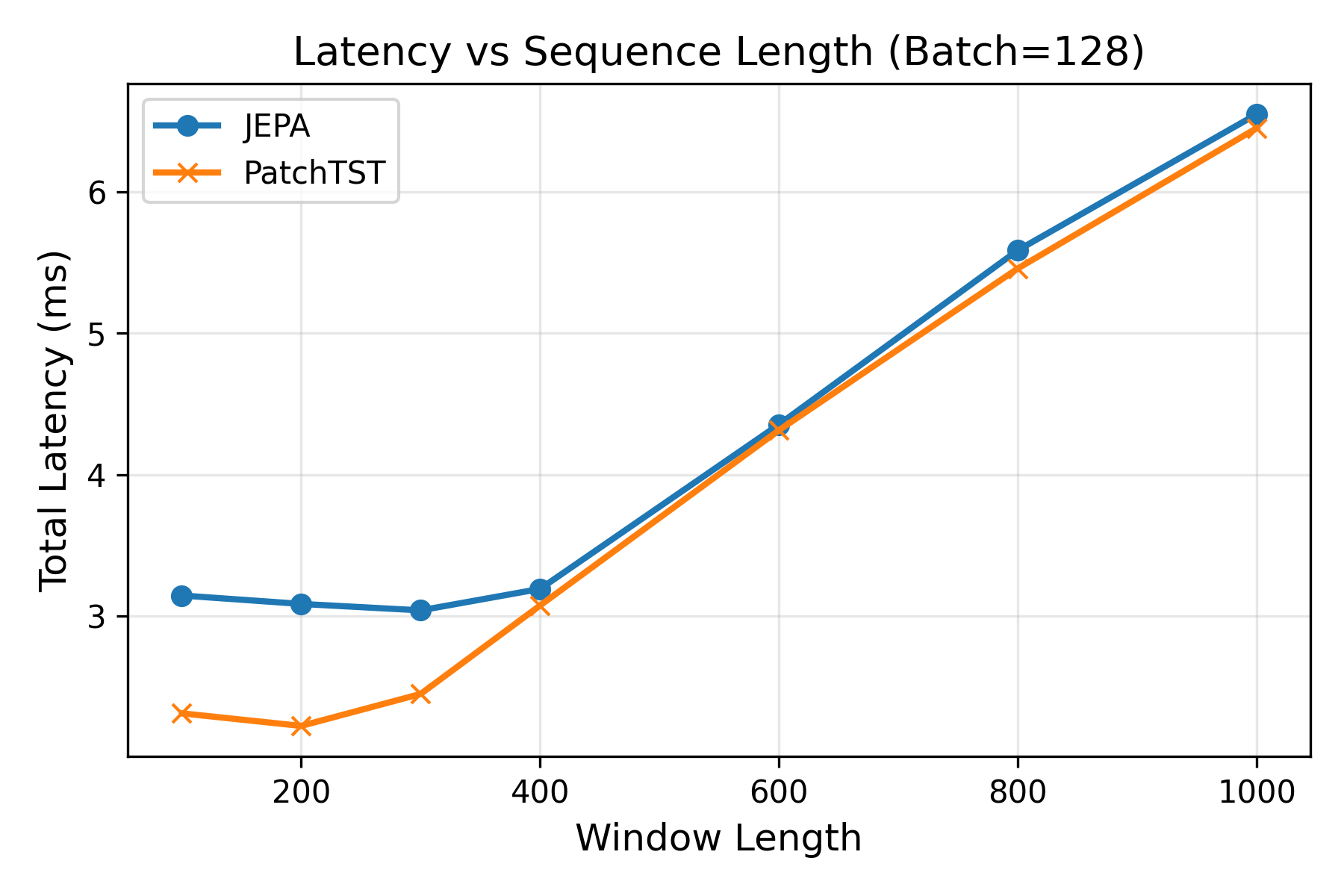}
        \caption{Latency vs. Window}
        \label{fig:window_scaling}
    \end{subfigure}
    \hfill % Pushes the next image to the right
    % Image 3
    \begin{subfigure}{0.32\linewidth}
        \centering
        \includegraphics[width=\linewidth]{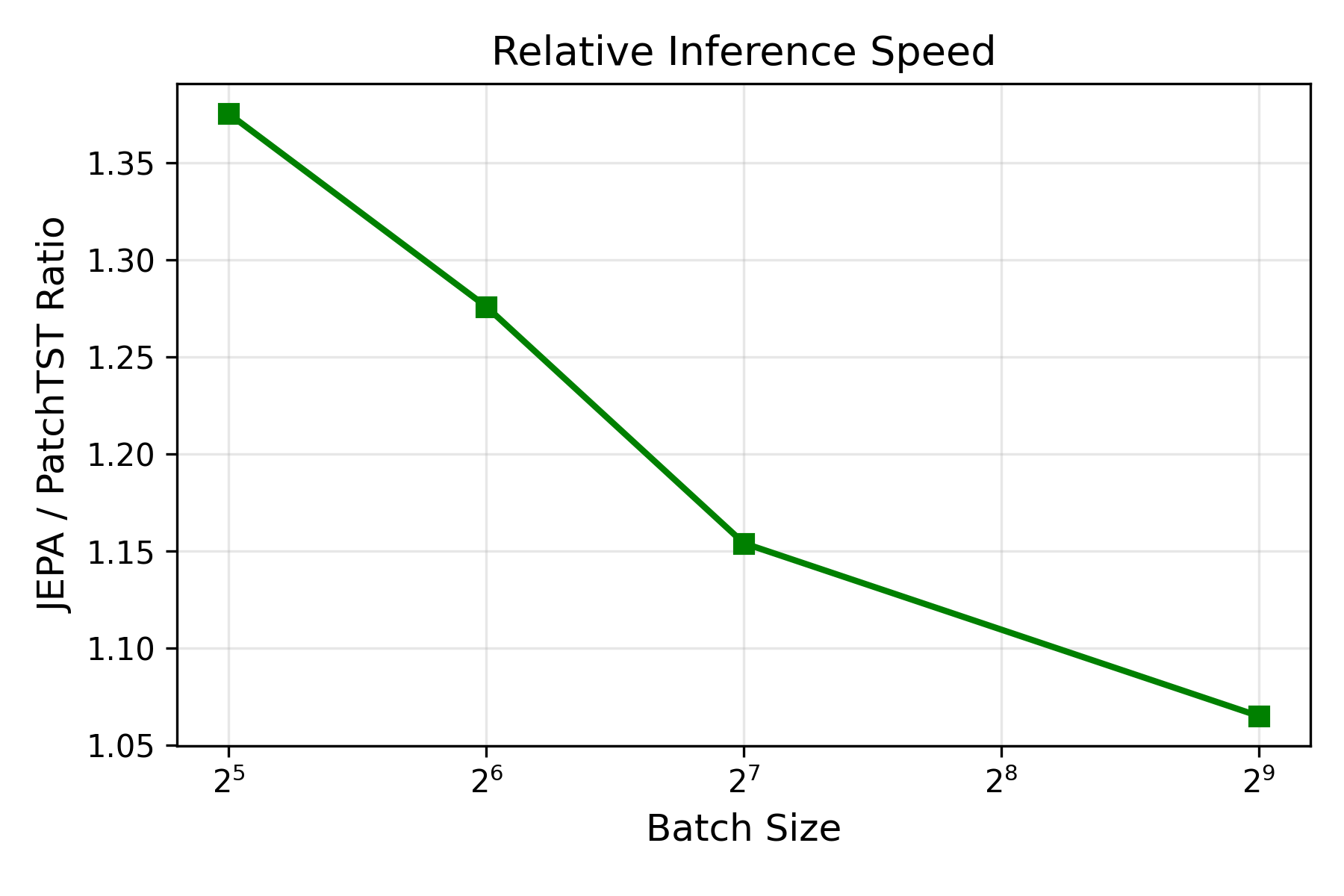}
        \caption{Latency Ratio}
        \label{fig:ratio_curve}
    \end{subfigure}
    
    \caption{Inference-time scaling comparisons across batch size, window length, and latency ratios.}
    \label{fig:inf}
\end{figure}
Figure~\ref{fig:inf} evaluates the inference efficiency of SC-JEPA compared with PatchTST from three complementary perspectives.
First, Figure~\ref{fig:batch_scaling} reports the per-sample latency vs.\ batch size: SC-JEPA is slower, but the gap shrinks at larger batches. Figure~\ref{fig:window_scaling} then reports latency vs.\ window length (batch fixed): both scale similarly, with SC-JEPA slightly slower. Figure~\ref{fig:ratio_curve} confirms this trend: the latency ratio decreases as batch size increases.

Overall, these results highlight a limitation of our approach: inference efficiency is not a primary contribution of SC-JEPA, as its richer architecture incurs higher computational cost than a single-backbone baseline.
Our design instead prioritizes early-warning performance and representation stability, with the runtime overhead becoming more manageable in deployment scenarios that support moderate to large batch sizes.

\end{document}